\documentclass[twoside]{article}
%
\usepackage[accepted]{aistats2022}
%



\usepackage[round]{natbib}

\usepackage{amsmath}
\usepackage[utf8]{inputenc} 
\usepackage[T1]{fontenc}    
\usepackage{hyperref}       
\usepackage{url}            
\usepackage{booktabs}       
\usepackage{amsfonts}       
\usepackage{nicefrac}       
\usepackage{microtype}      
\usepackage{xcolor}         
\usepackage{amsmath}
\usepackage{float}
\usepackage{graphicx}
\usepackage{subfig}
\usepackage{amssymb}
\usepackage{algorithm,algorithmic}
\usepackage{amsthm}
\newtheorem{theorem}{Theorem}[section]
\newtheorem{lemma}[theorem]{Lemma}
\newtheorem{proposition}[theorem]{Proposition}


\providecommand{\customgenericname}{}
\newcommand{\newcustomtheorem}[2]{%
  \newenvironment{#1}[1]
  {%
  \renewcommand\customgenericname{#2}%
  \renewcommand\theinnercustomgeneric{##1}%
  \innercustomgeneric
  }
  {\endinnercustomgeneric}
}
\newcustomtheorem{customthm}{Theorem}
\newcustomtheorem{customlemma}{Lemma}


\newtheorem{assumption}{Assumption}
\begin{document}

%

%

\twocolumn[

\aistatstitle{Schedule Based Temporal Difference Algorithms}

\aistatsauthor{Rohan Deb$^{*1}$ \And Meet Gandhi$^{*1}$ \And Shalabh Bhatnagar$^{1}$ }

\aistatsaddress{ $^{1}$Department of Computer Science and Automation\\
Indian Institute of Science, \\
Bangalore\\
\{rohandeb,meetgandhi,shalabh\}@iisc.ac.in} ]
\begin{abstract}
Learning the value function of a given policy from data samples is an important problem in Reinforcement Learning.
TD($\lambda$) is a popular class of algorithms to solve this problem.
However, the weights assigned to different $n$-step returns in TD($\lambda$), controlled by the parameter $\lambda$, decrease exponentially with increasing $n$.
In this paper, we present a $\lambda$-schedule procedure that generalizes the TD($\lambda$) algorithm to the case when the parameter $\lambda$ could vary with time-step. This allows flexibility in weight assignment, i.e., the user can specify the weights assigned to different $n$-step returns by choosing a sequence $\{\lambda_t\}_{t \geq 1}$.
Based on this procedure, we propose an on-policy algorithm -- TD($\lambda$)-schedule, and two off-policy algorithms -- GTD($\lambda$)-schedule and TDC($\lambda$)-schedule,
respectively. We provide proofs of almost sure convergence for all three algorithms under a general Markov noise framework.
\end{abstract}

\section{Introduction}
\label{sec_introduction}

Reinforcement Learning (RL) problems can be categorised into two classes: prediction and control. 
The prediction problem deals with estimating the value function of a given policy as accurately as possible. 
Obtaining precise estimates of the value function is an important problem because value function provides useful information, such as, importance of being in different game positions in Atari games (\cite{DQN}), taxi-out times at big airports (\cite{TaxioutTime}), failure probability in large communication networks (\cite{FailureProbTelNetworks}), etc.
See \cite{Dann} for a discussion on the prediction problem.

Evaluating the value function is an easy task when the state space is finite and the model of the system (transition probability matrix and single-stage reward function) is known. 
However, in many practical scenarios, the state space is large and the transition probability kernel is not available. 
Instead, samples in the form of (state, action, reward, next-state) are available and the value function needs to be estimated from these samples.
In the RL community, learning using the samples generated from the actual or simulated interaction with the environment is called model-free learning.

Monte-Carlo (MC) methods and one-step Temporal Difference (TD) methods are popular algorithms for estimating the value function in the model-free setting. 
The $n$-step TD is a generalization of one-step TD, wherein the TD error is obtained as the difference between the current estimate of the value function and the $n$-step return, instead of the one-step return. 
These $n$-step methods span a spectrum with MC at one end and one-step TD at the other. 
The TD($\lambda$) algorithm takes the next logical step of combining the $n$-step returns for different values of $n$.
A single parameter $\lambda$ decides the weight assigned to different $n$-step returns, which decreases exponentially as $n$ increases.

\subsection{Motivation and Contributions}
In some situations, it is observed that $n$-step TD for some intermediate values of $n$ outperforms the same for both small as well as large values of $n$ (cf. Figure 7.2 of \cite{SuttonBarto_book}). It's demonstrated empirically in
\cite{SuttonBarto_book} that neither MC nor one-step TD performs the best in terms of minimizing the Mean-Squared Error (MSE),
because one-step TD is highly biased whereas MC has high variance.
As a result, intermediate values of $n$ are likely to achieve the lowest MSE.
With the TD($\lambda$) algorithm, we cannot combine only specifically chosen $n$ step returns such as one described above, as it assigns a weight of $(1-\lambda)\lambda^n$ to the $n$-step return for each $n$.
In this paper, we design a $\lambda$-schedule procedure that allows flexibility in weight assignment to the different $n$-step returns.
Specifically, we generalize the TD($\lambda$) algorithm to the case where the parameter $\lambda$ depends on the time-step. This produces a sequence $\{\lambda_t\}_{t\geq1}$.
Using this procedure, we develop an on-policy algorithm called TD($\lambda$)-schedule and two off-policy algorithms called GTD($\lambda$)-schedule and TDC($\lambda$)-schedule. 
We prove the convergence of all the three algorithms under a general Markov noise framework. 
Even though we consider the state space to be finite for ease of exposition, our proofs carry through easily to the case of general state spaces under additional assumptions.
We point out here that while the TD($\lambda$)-schedule and GTD($\lambda$)-schedule algorithms are single-timescale algorithms with Markov noise, the TDC($\lambda$)-schedule algorithm in fact involves two timescales with Markov noise in both the slower and faster iterates.
Our proof techniques are more general as compared to others in the literature. For instance, \cite{TsitsiklisVanRoy} prove the convergence of TD($\lambda$) using the results of \cite{Benveniste_book}. 
However, to the best of our knowledge, there are no known generalizations of that result to the case of two-timescale algorithms such as TDC($\lambda$)-schedule. 
Moreover, unlike \cite{TsitsiklisVanRoy}, our results can further be extended to the case where the underlying Markov chain does not possess a unique stationary distribution but a set of such distributions that could even depend on an additional control process.
\section{\texorpdfstring{On-policy TD(\(\lambda\))}--schedule}
\label{sec_motivation}
In this section, we precisely define the on-policy TD($\lambda$)-schedule algorithm for an infinite-horizon discounted reward Markov chain induced by the deterministic policy $\pi$.
We note that though our results are applicable to Markov chains with general state space, we restrict our attention to the case where the state space is finite. 
Thus, the Markov chain can be defined in terms of a transition probability matrix as opposed to a transition probability kernel. 

We assume that the Markov chain induced by the fixed policy $\pi$ is irreducible and aperiodic, whose states lie in a discrete state space $\mathcal{S}$. 
We can index the state space with positive integers, and view $\mathcal{S} = \{1,2,\ldots, n\}$. 
Each state $s \in \mathcal{S}$ has a corresponding feature vector $\phi(s) \in \mathcal{R}^d$ associated with it.
We denote $\{s_t | t=0,1,\ldots\}$ as the sequence of states visited by the Markov chain. 
The transition probability matrix $P$ induced by the Markov chain has $(i, j)^{th}$ entry, denoted by $p_{ij}$, as probability of going from state $s_t = i$ to $s_{t+1} = j$. 
Also, the scalar $R_{t+1} \equiv r(s_t, s_{t+1})$ represents the single-stage reward obtained when the system transitions from state $s_t$ to state $s_{t+1}$.
Since the state space is finite, $\sup_t |R_{t+1}| <\infty$ almost surely.
We let $\gamma \in (0,1)$ be the discount factor.
The value function $V^{\pi}: \mathcal{S} \rightarrow \mathcal{R}$ associated with this Markov chain is given by
\(V^{\pi}(s) = \mathbb{E}\Big[\sum_{t=0}^{\infty}\gamma^t R_{t+1}| s_0 = s\Big].\)
The above expectation is well-defined because the single-stage reward is bounded as mentioned above.
We consider approximations of $V^{\pi}: \mathcal{S} \rightarrow R$ using function $V_{\theta}: \mathcal{S} \times \mathcal{R}^d \rightarrow \mathcal{R}$, where $V_{\theta}$ is a linear function approximator parameterized by $\theta$, i.e., $V_{\theta}(s) = \theta^{T}\phi(s)$. 
Our aim is to find the parameter $\theta \in \mathcal{R}^d$ to minimise the \emph{Mean Squared Error (MSE)} between the true value function $V^{\pi}(\cdot)$ and the approximated value function $V_{\theta}(\cdot)$ for a given $\theta$, where
\begin{equation}
    MSE (\theta) = \sum_{s \in \mathcal{S}} d^{\pi}(s)\left[V^{\pi}(s) - V_{\theta}(s)\right]^2 = ||V^{\pi} - V_{\theta}||^2_{D}.
\end{equation} 
Here, $\{d^{\pi}(s)\}_{s \in \mathcal{S}}$ represents the steady-state distribution for the Markov chain and the matrix $D$ is a diagonal matrix of dimension $n \times n$ with the entries $d^{\pi}(s)$ on its diagonals.
Minimising MSE with respect to $\theta$ by stochastic-gradient descent gives the update equation for $\theta$ as, 
\begin{equation}
    \theta_{t+1} = \theta_{t} + \alpha_t [V^{\pi}(s_t) - V_{\theta}(s_t)]\phi(s_t),
\end{equation}
where $\theta_t$ is the value of parameter $\theta$ at time $t$. 
We now motivate the main idea of the paper. 
We propose an algorithm where the user can assign the weights to different $n$-step returns to estimate $V^{\pi}(s_t)$.
We use the discounted-aware setting as described in Section-5.8 of  \cite{SuttonBarto_book} to define the return, which is used as an estimate of $V^{\pi}(s_t)$.
\begin{equation}
\label{lambda_return}
\begin{split}
    G_{t}^{\Lambda(\cdot)}(\theta) & = (1-\gamma)[\Lambda_{11}R_{t+1}] \\
 & + \gamma(1-\gamma)[\Lambda_{21}(R_{t+1}+V_{\theta}(s_{t+1}))
 \\ & \qquad\qquad +\Lambda_{22}(R_{t+1}+R_{t+2})] \\
 & + \gamma^2(1-\gamma)[\Lambda_{31}(R_{t+1} + V_{\theta}(s_{t+1})) 
 \\ & \qquad\qquad\quad + \Lambda_{32}(R_{t+1}+R_{t+2}+V_{\theta}(s_{t+2})) \\ & \qquad \qquad\quad +\Lambda_{33}(R_{t+1}+R_{t+2}+R_{t+3})] +\cdots
\end{split}
\end{equation}
The above equation is interpreted as follows: the episode ends in one step with probability $(1-\gamma)$, in two steps with probability $\gamma(1-\gamma)$, in three steps with probability $\gamma^{2}(1-\gamma)$ and so on. 
When the episode ends in one step, bootstrapping is not applicable and thus the flat return $R_{t+1}$ is weighed by $\Lambda_{11} = 1$. 
When the episode ends in two steps, the following two choices are available: bootstrap after one step ($R_{t+1} + V_{\theta}(s_{t+1})$) or use the flat return ($R_{t+1} + R_{t+2}$).
We weight these two quantities by $\Lambda_{21}$ and $\Lambda_{22}$ respectively under the constraint that $\Lambda_{21}$ and $\Lambda_{22}$ are non-negative and sum to 1.
Similarly, when the episode ends in $3$ steps, we have three choices: bootstrap after one step, bootstrap after two steps or take the flat return. We weight these three quantities by $\Lambda_{31}$, $\Lambda_{32}$ and $\Lambda_{33}$ respectively, under the constraint that these three weights are non-negative and sum to one, and so on. 
These weights can be summarized in a matrix as below, where each $\Lambda_{ij} \in [0,1]$ and weights in each row sum to one.
\begin{center}
    \[\Lambda = 
    \begin{bmatrix}
        1 & 0 & 0 & 0 \cdots\\
        \Lambda_{21} & \Lambda_{22} & 0 & 0 \cdots \\
        \Lambda_{31} & \Lambda_{32} & \Lambda_{33} & 0 \cdots \\
        
        \vdots & \vdots & \vdots &\ddots
    \end{bmatrix}.\]
 \end{center}
To obtain an online equation for $G_{t}^{\Lambda(\cdot)}(\theta) - V_{\theta}(s_{t})$, we add and subtract $V_{\theta}(s_{t+1})$ to all the terms starting from $R_{t+1}+R_{t+2}$ in \eqref{lambda_return}.
We notice that, on RHS, the coefficient of $R_{t+1}$ is 1. Similarly, the coefficient of $V_{\theta}(s_{t+1})$ is $\gamma$ (See Appendix A1 for details). Hence, 
\begin{equation*}
\begin{split}
    G_{t}^{\Lambda(\cdot)}(\theta) &- V_{\theta}(s_t)  = R_{t+1} + \gamma V_{\theta}(s_{t+1}) - V_{\theta}(s_{t}) \\
    & + \gamma\Lambda_{22}\{ (1-\gamma)(R_{t+2}) \\ &\qquad+ \gamma(1-\gamma)\Big[\frac{\Lambda_{32}}{\Lambda_{22}}(R_{t+2}+V_{\theta}(s_{t+2})) \\ & \qquad\qquad + \frac{\Lambda_{33}}{\Lambda_{22}}(R_{t+2}+R_{t+3})\Big] \\
    & \qquad + \cdots - V_{\theta}(s_{t+1})  \}.
\end{split}
\end{equation*}
To write the above equation recursively, we notice that we need to have the additional constraint $\Lambda_{32} + \Lambda_{33} = \Lambda_{22}$. 
In general we must ensure that $\Lambda_{j,j-1} + \Lambda_{j,j} = \Lambda_{j-1, j-1}$ $\forall j \geq 2$. Setting $\Lambda_{22} = \lambda_{1}$ and using the above constraint, we obtain $\Lambda_{21} = (1-\lambda_{1}).$ 
Further, setting $\Lambda_{33} = \lambda_{1}\lambda_{2}$, $\Lambda_{32} = \lambda_{1}(1-\lambda_{2})$ (to ensure that $\Lambda_{32} + \Lambda_{33} = \Lambda_{22}$), we obtain $\Lambda_{31} = (1-\lambda_{1})$. 
We refer to the user-specified sequence $\lambda_{j}, j \in \mathbb{N}$ as the \emph{$\lambda$-schedule} hereafter.
The weight matrix $\Lambda$ can be constructed from the user specified $\lambda$-schedule as below:
\begin{center}
    \[\Lambda = \begin{bmatrix}
        1 & 0 & 0 & 0\ldots\\
        (1-\lambda_{1}) & \lambda_{1} & 0 & 0\ldots \\
        (1-\lambda_{1}) & \lambda_{1}(1-\lambda_{2}) & \lambda_{1}\lambda_{2} &0\ldots \\
        (1-\lambda_{1}) & \lambda_{1}(1-\lambda_{2}) & \lambda_{1}\lambda_{2}(1-\lambda_{3}) & \ddots \\
        
        \vdots & \vdots & \vdots& \vdots
    \end{bmatrix}.\]
 \end{center}
Thus,
\begin{equation*}
\begin{split}
    G_{t}^{[\lambda_1:]}(\theta) &- V_{\theta}(s_t)  = R_{t+1} + \gamma V_{\theta}(s_{t+1}) - V_{\theta}(s_{t}) \\
    & + \gamma\lambda_1\{(1-\gamma)(R_{t+2}) + \gamma(1-\gamma)\\&
    [(1-\lambda_{2})(R_{t+2}+V_{\theta}(s_{t+2})) + \lambda_{2}(R_{t+2}+R_{t+3})] \\
    & + \ldots - V_{\theta}(s_{t+1})\} \\
    & = \delta_{t} + \gamma\lambda_{1}[G_{t+1}^{[\lambda_2:]} - V_{\theta}(s_{t+1})] \\ & = \delta_{t} + \gamma\lambda_{1}\delta_{t+1} + \gamma^{2}\lambda_{1}\lambda_{2}\delta_{t+2} + \cdots
\end{split}
\end{equation*}
where, $\delta_{t}$ is the TD-error defined as $\delta_t = R_{t+1} + \gamma V_{\theta}(s_{t+1}) - V_{\theta}(s_t)$.
The superscript $[\lambda_{1}:]$ in the return defined above denotes that the two-step flat return is weighed by $\lambda_{1}$, the three-step flat return is weighted by $\lambda_{1}\lambda_{2}$, etc., whereas the superscript $[\lambda_{2}:]$ denotes that the two-step flat return is weighted by $\lambda_{2}$, the three-step flat return by $\lambda_{2}\lambda_{3}$, etc.
Then, for $t = 0, 1, \ldots,$ the {\bf TD($\lambda$)-schedule} algorithm updates $\theta$ as follows: 
\begin{equation}
\label{tdLambdaSchedule}
    \theta_{t+1} = \theta_t + \alpha_t\delta_t z_t, \mbox{ where } z_t = \sum_{k=0}^t\Big(\prod_{j=1}^{t-k}\gamma \lambda_j\Big)\phi(s_k).
\end{equation}
Here $\{\alpha_t\}_{t\geq0}$ is the sequence of step-size parameters and $\theta_0$ is the initial parameter vector.
We make some key observations here: \textbf{(1)} If $\lambda_j = 1$ for $j \leq n$ and $\lambda_j = 0$ for $j > n$, for some $n > 0$, then we obtain the $n$-step TD algorithm. \textbf{(2)} If $\lambda_{j} = 1$ $\forall j \in \mathbb{N}$, then we obtain the MC algorithm. \textbf{(3)} If $\lambda_{j} = \lambda$ $\forall j \in \mathbb{N}$, then we obtain the TD($\lambda$) algorithm. 

For the remaining part of the paper, we only consider $\lambda$-schedules where $\exists L > 0$ such that $\lambda_j = 0$ for all $j > L$.
We denote the return associated with such a schedule by $G_{t}^{[\lambda_{1}:\lambda_{L}]}(\theta)$.
The equation of $z_t$ then reduces to
\begin{equation}
\label{trace_eq}
 z_t = \sum_{k=t-L}^{t}\left(\prod_{j=1}^{t-k}\gamma\lambda_j\right)\phi(s_k).
\end{equation}
We point out that $z_t$ can't be written recursively in terms of $z_{t-1}$ unlike TD($\lambda$) and therefore using schedules of the form as described above becomes essential to avoid explosion of space. Note that we need to store the last $L$ states to compute the eligibility trace $z_t$. The algorithm for TD($\lambda$)-schedule is given below:
\begin{algorithm}[H]
\begin{algorithmic}[1]
\STATE \textbf{Input:} Policy $\pi$, step-size sequence $\{\alpha_t\}_{t\geq1}$, lambda-schedule $\{\lambda_{t}\}_{t=1}^{L}$ and the feature map $\phi(\cdot)$.
\STATE Initialize $\theta_0$ and $s_0$ randomly, $z \leftarrow 0$.
\FOR{$t$=1,2,3,\ldots}
\STATE Choose an action $a_t \sim \pi(\cdot|s_{t-1})$
\STATE Perform action $a_t$ and observe reward $R_{t+1}$ and next state $s_{t+1}$.
\STATE Compute the eligibility trace as in \eqref{trace_eq}.
\STATE $\delta_t \leftarrow R_t + \gamma \phi(s_{t+1})^{T}\theta - \phi(s_{t})^T\theta$
\STATE $\theta_t \leftarrow \theta_t + \alpha_{t}\delta_t z_t$
\ENDFOR
\end{algorithmic}
\caption{TD($\lambda$)-schedule}
\label{alg:seq}
\end{algorithm}
\subsection{EqualWeights schedule}
As already mentioned, it has been seen empirically that $n$-step TD performs better for some intermediate values of $n$. If for a particular problem, $n$-step TD achieves low MSE for some $n_{1}\leq n \leq n_{2}$ (cf. Figure 7.2 of \cite{SuttonBarto_book}), then it makes sense to combine only these $n$-step returns instead of all the $n$-step returns with exponentially decreasing weights. The TD$(\lambda)$-schedule algorithm lets us do this. Suppose, we want to assign equal weights to all $n$-step returns for $n_{1}\leq n \leq n_{2}$. We can achieve this by selecting a $\lambda$-schedule as follows:
\[\lambda_{i} =
\left\{
	\begin{array}{ll}
		1  & \mbox{if } i < n_1 \\
		1-\frac{1}{n_2-i+1} & \mbox{if } n_1 \leq i \leq n_2\\
		0 & \mbox{if } i > n_2.\\
	\end{array}
\right. 
\]
We call this schedule \emph{EqualWeights}$(n_1,n_2)$. To take an example consider \emph{EqualWeights}$(3,5)$. The $\lambda$-schedule is given by
\(\lambda_{1} = 1,\quad \lambda_{2} = 1, \quad \lambda_{3} = \frac{2}{3}, \quad \lambda_{4} = \frac{1}{2}, \quad \lambda_{j} = 0, \quad\forall j\geq5\) and the associated weight matrix is:
\begin{center}{
    \[\Lambda = 
    \begin{bmatrix}
 1 &         0 &         0 &         0 &         0  & 0 & 0\ldots\\        
 0 &         1 &         0 &         0 &         0  & 0 & 0\ldots\\        
 0 &         0 &         1 &         0 &         0  & 0 & 0\ldots\\        
 0 &         0 &         1/3 &       2/3 &       0 & 0 & 0\ldots \\        
 0 &         0 &         1/3 &       1/3 &       1/3 & 0 & 0\ldots \\
        
        \vdots & \vdots & \vdots& \vdots &\vdots &\ddots &\vdots
    \end{bmatrix}.\]
}\end{center}
We notice that when the episode length $\geq 5$, the above matrix assigns equal weights to $3$-step, $4$-step and $5$-step TD returns.
When the episode length $\leq 3$, it takes the Monte Carlo return. Such an arbitrary weight assignment to $n$-step returns is not possible with TD($\lambda$). Appendix A5 reports evaluation with the EqualWeights schedule on some standard MDPs.

\section{\texorpdfstring{Off Policy Gradient \(\lambda\)}--schedule algorithms}
While TD($\lambda$)-schedule is an on-policy algorithm, we now present a couple of off-policy algorithms that are based on the $\lambda$-schedule procedure.
We first describe the off-policy setting briefly. 
The agent selects actions according to a \emph{behaviour policy} $\mu:\mathcal{S}\times\mathcal{A}\rightarrow[0,1]$, while we are interested in computing the value function $V^{\pi}$ associated with the \emph{target policy} $\pi:\mathcal{S}\times\mathcal{A}\rightarrow[0,1]$.
Let $d^{\mu}(s)$, $s \in \mathcal{S}$ denote the steady-state probabilities for the Markov chain under the behaviour policy $\mu$ and let the importance sampling ratio $\rho_{t} = \frac{\pi(a_{t}|s_{t})}{\mu(a_{t}|s_{t})}$, where $a_{t}$ is the action picked at time-step $t$. 
Along the lines of per-decision importance sampling (Section 5.9, \cite{SuttonBarto_book}), we obtain the off-policy $\lambda$-schedule return as
\begin{equation*}
    \begin{split}
        G_{t}^{[\lambda_1:\lambda_{L}]}(\theta) - V_{\theta}(s_t) &= \rho_{t}\delta_{t} + \gamma\lambda_{1}\rho_{t}\rho_{t+1}\delta_{t+1} \\ & + \cdots+ \gamma^{L}\lambda_{1}\ldots\lambda_{L}\rho_{t}\ldots\rho_{t+L}\delta_{t+L}.
    \end{split}
\end{equation*}
We now obtain the \emph{Off-Policy TD($\lambda$)-schedule} algorithm by defining the eligibility vector $z_{t}$ and the update equation for $\theta$ as below:
{\small
\begin{equation}
    \label{offPolicy_z}
    \theta_{t+1} = \theta_{t} + \alpha_{t}\delta_{t}z_{t} \mbox{, with }
    z_t = \sum_{k=t-L}^t\rho_{t}\Big(\prod_{j=1}^{t-k}\rho_{t-j}\gamma \lambda_j\Big)\phi(s_k).
\end{equation}}
We observe that the above algorithm diverges on Baird's Counterexample (\cite{Baird}) (See Appendix A5).
Gradient based algorithms \cite{Maei_PhD} are observed to converge in the off-policy setting. 
Inspired by this, we develop two gradient-based schedule algorithms, \emph{GTD($\lambda$)-Schedule} and \emph{TDC($\lambda$)-Schedule}, as described below. 

We note that the $\lambda$-schedule return defined in \eqref{lambda_return} can also be written as below (See Appendix A2):
{\small \begin{equation} \label{return}
G_{t}^{[\lambda_1:\lambda_L]}(\theta) = R_{t+1} + \gamma \left[(1-\lambda_1)V_{\theta}(s_{t+1})+\lambda_1 G_{t+1}^{[\lambda_{2}:\lambda_{L}]}(\theta)\right].
\end{equation}}
Next we define the value function associated with state $s$ as
{\small \begin{equation} \label{eu_eqn}
\begin{split}
V^{\pi}(s) & = \mathbb{E}\left[G_{t}^{[\lambda_1:\lambda_L]}(\theta)|s_{t} = s,\pi\right] \triangleq \left(T^{\pi,[\lambda_1:\lambda_L]}V^{\pi}\right)(s).
\end{split}
\end{equation}}
Here, $T^{\pi,[\lambda_1:\lambda_L]}$ denotes the $\lambda$-schedule Bellman operator. The objective function $J(\theta)$ on which gradient descent is performed is the \emph{Mean Squared Projected Bellman Error} defined as follows:
\begin{equation}
\begin{split}
    J(\theta) & \triangleq ||V_{\theta} - \Pi T^{\pi,[\lambda_1:\lambda_L]}V_{\theta}||_{D_{\mu}}^{2}
    \\ &= \sum_{s} d^{\mu}(s)\left(V_{\theta}(s) - \Pi T^{\pi,[\lambda_1:\lambda_L]}V_{\theta}(s)\right)^{2},
\end{split}
\end{equation}
where $d^{\mu}(s)$ is the visitation probability to state $s$ under the steady-state distribution when the behaviour policy $\mu$ is followed and $D_{\mu}$ is an $n \times n$ diagonal matrix  with $d^\mu(s)$ as its $s^{th}$ diagonal entry. 
We also define
{\small \begin{equation}
    \delta_{t}^{[\lambda_1:\lambda_L]}(\theta) \triangleq G_{t}^{[\lambda_1:\lambda_L]}(\theta) - \theta^T\phi_t,
\end{equation}
\begin{equation}
    P_{\mu}^{\pi}\delta_{t}^{[\lambda_1:\lambda_L]}(\theta)\phi_{t} \triangleq \sum_{s} d^{\mu}(s)\mathbb{E}_{\pi}\left[\delta_{t}^{[\lambda_{1}:\lambda_{L}]}(\theta)\phi_{t}|S_{t} = s, \pi\right].
\end{equation}}
As with \cite{Maei_PhD}, using these definitions, the \emph{Projected Bellman Error} is expressed as the product of three expectations in the following lemma. The proofs of the results below are provided in the Appendix A3.
\begin{lemma}The objective function $J(\theta) = ||V_{\theta} - \Pi T^{\pi,[\lambda_1:\lambda_L]}V_{\theta}||_{D}^{2}$ can be equivalently written as
    $J(\theta) = \left(P_{\mu}^{\pi}\delta_{t}^{[\lambda_1:\lambda_L]}(\theta)\phi_{t}\right)^T \mathbb{E}_{\mu}\left[\phi_{t}\phi_{t}^T\right]^{-1}\left(P_{\mu}^{\pi}\delta_{t}^{[\lambda_1:\lambda_L]}(\theta)\phi_{t}\right).$
\end{lemma}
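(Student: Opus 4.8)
The plan is to follow the standard MSPBE factorization argument of \cite{Maei_PhD}, adapted to the $\lambda$-schedule Bellman operator. First I would introduce the feature matrix $\Phi \in \mathcal{R}^{n\times d}$ whose $s$-th row is $\phi(s)^T$, so that $V_{\theta} = \Phi\theta$ lies in the column space of $\Phi$, and write the $D_{\mu}$-weighted projection explicitly as $\Pi = \Phi(\Phi^T D_{\mu}\Phi)^{-1}\Phi^T D_{\mu}$, under the standing assumption that $\Phi^T D_{\mu}\Phi = \mathbb{E}_{\mu}[\phi_t\phi_t^T]$ is invertible (linearly independent features). The key first observation is that since $V_{\theta}$ already lies in the range of $\Pi$, we have $\Pi V_{\theta} = V_{\theta}$, so that $V_{\theta} - \Pi T^{\pi,[\lambda_1:\lambda_L]}V_{\theta} = \Pi\big(V_{\theta} - T^{\pi,[\lambda_1:\lambda_L]}V_{\theta}\big)$, reducing the objective to the $D_{\mu}$-norm of a projected Bellman residual.

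Next I would compute the quadratic form $||\Pi x||_{D_{\mu}}^2 = x^T \Pi^T D_{\mu}\Pi x$ for a generic vector $x$. Substituting the expression for $\Pi$ and using the cancellation $\Phi^T D_{\mu}\Phi(\Phi^T D_{\mu}\Phi)^{-1} = I$, the cross term collapses and one obtains $\Pi^T D_{\mu}\Pi = D_{\mu}\Phi(\Phi^T D_{\mu}\Phi)^{-1}\Phi^T D_{\mu}$. Hence $||\Pi x||_{D_{\mu}}^2 = (\Phi^T D_{\mu}x)^T(\Phi^T D_{\mu}\Phi)^{-1}(\Phi^T D_{\mu}x)$. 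Setting $x = V_{\theta} - T^{\pi,[\lambda_1:\lambda_L]}V_{\theta}$ reduces the problem to identifying the outer factors $\Phi^T D_{\mu}x$ and the inner matrix $\Phi^T D_{\mu}\Phi$ with the quantities appearing in the statement.

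For the outer factors, I would use the definition \eqref{eu_eqn} of the Bellman operator to write, at each state $s$, $(T^{\pi,[\lambda_1:\lambda_L]}V_{\theta})(s) - V_{\theta}(s) = \mathbb{E}_{\pi}[G_t^{[\lambda_1:\lambda_L]}(\theta) - \theta^T\phi_t \mid s_t = s] = \mathbb{E}_{\pi}[\delta_t^{[\lambda_1:\lambda_L]}(\theta)\mid s_t = s]$. Multiplying by $d^{\mu}(s)\phi(s)$ and summing over $s$ then gives, since $\phi_t = \phi(s)$ is deterministic under the conditioning $S_t = s$, the identity $\Phi^T D_{\mu}\big(T^{\pi,[\lambda_1:\lambda_L]}V_{\theta} - V_{\theta}\big) = \sum_s d^{\mu}(s)\mathbb{E}_{\pi}[\delta_t^{[\lambda_1:\lambda_L]}(\theta)\phi_t\mid S_t = s] = P_{\mu}^{\pi}\delta_t^{[\lambda_1:\lambda_L]}(\theta)\phi_t$, which is exactly the required middle factor (the overall sign is immaterial because this vector appears quadratically). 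The inner matrix is handled by the elementary identity $\Phi^T D_{\mu}\Phi = \sum_s d^{\mu}(s)\phi(s)\phi(s)^T = \mathbb{E}_{\mu}[\phi_t\phi_t^T]$, whose inverse supplies the central factor. Assembling these three pieces yields the claimed expression.

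I expect the only genuine subtlety to be the careful matching of conditional expectations: verifying that the state-wise Bellman residual equals the conditional expectation of $\delta_t^{[\lambda_1:\lambda_L]}(\theta)$, and that pulling $\phi_t$ inside the expectation is legitimate precisely because $\phi_t = \phi(s_t)$ is $\sigma(s_t)$-measurable and hence constant under the conditioning. The linear-algebra manipulations ($\Pi V_{\theta} = V_{\theta}$ and the collapse of $\Pi^T D_{\mu}\Pi$) are routine, and the invertibility of $\mathbb{E}_{\mu}[\phi_t\phi_t^T]$ is the standing full-rank assumption on the features.
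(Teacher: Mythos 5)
Your proposal is correct and follows essentially the same route as the paper's proof, which likewise reduces $J(\theta)$ to the quadratic form $\bigl(\Phi^T D_{\mu}(T^{\pi,[\lambda_1:\lambda_L]}V_{\theta}-V_{\theta})\bigr)^T(\Phi^T D_{\mu}\Phi)^{-1}\bigl(\Phi^T D_{\mu}(T^{\pi,[\lambda_1:\lambda_L]}V_{\theta}-V_{\theta})\bigr)$ and then identifies the three factors with $P_{\mu}^{\pi}\delta_{t}^{[\lambda_1:\lambda_L]}(\theta)\phi_{t}$ and $\mathbb{E}_{\mu}[\phi_t\phi_t^T]$ exactly as you do. The only difference is that you spell out the projection algebra ($\Pi V_{\theta}=V_{\theta}$ and the collapse of $\Pi^T D_{\mu}\Pi$) that the paper delegates to a citation of \citet{Maei_PhD}.
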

The above lemma gives an expression for the objective function. However, the expectation is with respect to the target policy $\pi$, but needs to be computed from samples of the trajectory generated by the behaviour policy $\mu$. Secondly, the above is a forward-view  equation and needs to be converted to an equivalent backward view.
Theorem \ref{theorem_1} converts the expectation with respect to $\pi$ to an expectation with respect to $\mu$. In order to do so, as in \cite{Maei_PhD}, we define the following terms:
\begin{equation}
    \begin{split}
        G_{t}^{[\lambda_1:\lambda_L], \rho}(\theta) &= \rho_{t}\Big(R_{t+1} + \gamma\Big( (1-\lambda_{1})V_{\theta}(s_{t+1}) \\ & + \lambda_{1}G_{t+1}^{[\lambda_{2}:\lambda_{L}],\rho}(\theta)\Big)\Big),
    \end{split}
\end{equation}
\[\delta_{t}^{[\lambda_1:\lambda_L],\rho}(\theta) \triangleq G_{t}^{[\lambda_1:\lambda_L], \rho}(\theta) - \theta^T\phi_t.\]
\begin{theorem} \label{theorem_1}
$P_{\mu}^{\pi}\delta_{t}^{[\lambda_1:\lambda_L]}(\theta)\phi_{t} = \mathbb{E}_{\mu} \left[\delta_{t}^{[\lambda_{1}:\lambda_{L}],\rho}(\theta)\phi_{t}\right]$.
\end{theorem}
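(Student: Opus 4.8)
The plan is to reduce the claimed identity to a statement about conditional expectations of the returns and then prove it by induction on the length $L$ of the schedule. First I would observe that since $\phi_t = \phi(s_t)$ and $\theta^T\phi_t$ are deterministic once $S_t = s$ is fixed, both sides can be conditioned on $S_t = s$ with the common feature vector $\phi(s)$ pulled out. Because the stationary distribution of the behaviour chain is exactly $d^\mu$, the outer expectation $\mathbb{E}_\mu$ on the right-hand side factorizes as $\sum_s d^\mu(s)\mathbb{E}_\mu[\,\cdot\mid S_t = s]$, matching the explicit $d^\mu$-average defining $P_\mu^\pi$ on the left. Hence it suffices to prove, for every state $s$, the per-state identity $\mathbb{E}_\pi[G_t^{[\lambda_1:\lambda_L]}(\theta)\mid S_t = s] = \mathbb{E}_\mu[G_t^{[\lambda_1:\lambda_L],\rho}(\theta)\mid S_t = s]$, the $\theta^T\phi_t$ terms cancelling on both sides.

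The engine of the proof is the per-decision importance sampling identity: for any function $f$ of the action and the next state, $\mathbb{E}_\mu[\rho_t f(a_t,s_{t+1})\mid S_t = s] = \mathbb{E}_\pi[f(a_t,s_{t+1})\mid S_t = s]$, which holds because $\rho_t = \pi(a_t\mid s_t)/\mu(a_t\mid s_t)$ reweights the action distribution from $\mu$ to $\pi$ while the reward and transition kernels are policy-independent once the action is fixed. I would use this together with the recursion \eqref{return} for $G_t^{[\lambda_1:\lambda_L]}$ and the defining recursion for $G_t^{[\lambda_1:\lambda_L],\rho}$ to set up an induction on $L$. For the base case $L = 0$ (i.e. $\lambda_1 = 0$), both returns collapse to the one-step target and the claim is exactly the per-decision identity applied to $f(a_t,s_{t+1}) = R_{t+1} + \gamma V_\theta(s_{t+1})$.

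For the inductive step I would assume the per-state identity for the tail schedule $[\lambda_2:\lambda_L]$ at time $t+1$, i.e. $\mathbb{E}_\mu[G_{t+1}^{[\lambda_2:\lambda_L],\rho}\mid s_{t+1} = s'] = \mathbb{E}_\pi[G_{t+1}^{[\lambda_2:\lambda_L]}\mid s_{t+1} = s'] =: g(s')$ for all $s'$, which is legitimate since the returns are time-homogeneous. Taking $\mathbb{E}_\mu[\,\cdot\mid S_t = s]$ of the $\rho$-recursion, I would condition on $(a_t,s_{t+1})$ and invoke the Markov property to write $\mathbb{E}_\mu[G_{t+1}^{[\lambda_2:\lambda_L],\rho}\mid s_{t+1},a_t,s_t = s] = g(s_{t+1})$; the per-decision identity then converts $\mathbb{E}_\mu[\rho_t(R_{t+1} + \gamma(1-\lambda_1)V_\theta(s_{t+1}) + \gamma\lambda_1 g(s_{t+1}))\mid S_t = s]$ into the corresponding $\pi$-expectation, which equals $\mathbb{E}_\pi[G_t^{[\lambda_1:\lambda_L]}\mid S_t = s]$ after applying the tower property and the recursion \eqref{return}.

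The step I expect to be the main obstacle is the correct handling of the single ratio $\rho_t$ when it multiplies the entire tail return $G_{t+1}^{[\lambda_2:\lambda_L],\rho}$, which itself already carries $\rho_{t+1},\ldots,\rho_{t+L}$. The care needed is to show that $\rho_t$ corrects only the action at time $t$ and does not interfere with the tail: this is precisely where conditioning on $s_{t+1}$ and the Markov property are essential, so that given $s_{t+1}$ the tail return is conditionally independent of $(s_t,a_t)$ and the inductive hypothesis can be applied to $g(s_{t+1})$ \emph{before} the per-decision reweighting is performed.
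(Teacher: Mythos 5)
Your proof is correct but structured differently from the paper's. The paper's argument (Appendix A3.2) expands $\delta_t^{[\lambda_1:\lambda_L],\rho}(\theta)\phi_t$ all at once into the telescoped sum $\sum_{j=0}^{L}\gamma^j\lambda_1\cdots\lambda_j\,\rho_{t:t+j}\,\delta_{t+j}(\theta)\phi_t$ and converts each term separately using the multi-step identity $\mathbb{E}_\mu[\rho_{t:t+j}\delta_{t+j}(\theta)\phi_t\mid S_t=s]=\mathbb{E}_\pi[\delta_{t+j}(\theta)\phi_t\mid S_t=s]$, before reassembling the sum into $\delta_t^{[\lambda_1:\lambda_L]}(\theta)\phi_t$; you instead peel off one importance ratio at a time by inducting on the schedule length through the recursion \eqref{return} and its $\rho$-weighted analogue. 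Both arguments rest on the same one-step per-decision reweighting fact, but your induction makes explicit the iterated conditioning (on $s_{t+1}$, via the Markov property and the tower rule) that the paper's multi-step identity absorbs in a single stroke, and it handles more cleanly the point you yourself flag as delicate: unrolling the recursive definition of $G_t^{[\lambda_1:\lambda_L],\rho}$ produces residual terms of the form $\rho_{t:t+j-1}(\rho_{t+j}-1)\theta^T\phi_{t+j}$ that vanish only in expectation (the paper invokes the corresponding identity $\mathbb{E}[(\rho_t-1)\theta^T\phi_t\phi_k]=0$ only later, in the proof of Theorem 3.4, and leaves it implicit in the proof of this theorem), whereas your one-ratio-at-a-time conditioning never generates them. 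What the paper's version buys is brevity and an intermediate object --- the weighted sum of importance-corrected TD errors --- that is reused in Lemma 3.3 and Theorem 3.4; what yours buys is a self-contained, state-by-state verification that the recursively defined $\rho$-return is an unbiased surrogate for the on-policy $\lambda$-schedule return.
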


Theorem \ref{Forward_to_Backward} converts the forward view into an equivalent backward view using the lemma below.
\begin{lemma}
\label{index_change_lemma}
    $\mathbb{E}_{\mu}\left[\rho_{t}\gamma\lambda_{1}\delta_{t+1}^{[\lambda_{2}:\lambda_{L}],\rho}(\theta)\phi_{t}\right] = \mathbb{E}_{\mu}\left[\rho_{t-1}\gamma\lambda_{1}\delta_{t}^{[\lambda_{2}:\lambda_{L}],\rho}(\theta)\phi_{t-1}\right]$.
\end{lemma}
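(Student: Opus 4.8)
The plan is to prove this purely as a time-shift identity, using the stationarity of the behaviour trajectory. Since both sides are expectations under $\mathbb{E}_\mu$, taken with the state marginal in its steady-state distribution $d^\mu$ and actions drawn from $\mu$, the process $\{(s_k,a_k)\}_{k}$ is a strictly stationary (time-homogeneous Markov) process. The right-hand side is obtained from the left-hand side by replacing every time index by its predecessor, so the claim is exactly the statement that a shift by one time step leaves the expectation unchanged.

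First I would unroll the recursive definition of $\delta_{t+1}^{[\lambda_{2}:\lambda_{L}],\rho}(\theta)$ through the recursion for $G^{[\lambda_2:\lambda_L],\rho}$. Because $\lambda_j = 0$ for $j > L$, this recursion terminates after finitely many steps and yields an explicit finite sum in the rewards $R_{t+2},R_{t+3},\ldots$, the bootstrapped values $V_{\theta}(s_{t+k})$, and the importance ratios $\rho_{t+1},\rho_{t+2},\ldots$. Together with the leading factor $\rho_{t}\phi_{t}$, the entire left-hand integrand becomes a fixed measurable function $f$ of the finite window $(s_t,a_t,s_{t+1},a_{t+1},\ldots,s_{t+L+1})$; here I use that each reward $R_{k+1} = r(s_k,s_{k+1})$ and each ratio $\rho_k = \pi(a_k|s_k)/\mu(a_k|s_k)$ is a deterministic function of the trajectory coordinates, so no additional randomness enters.

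Next I would observe that the right-hand integrand is literally $f$ evaluated on the window shifted down by one index, namely $(s_{t-1},a_{t-1},s_t,a_t,\ldots,s_{t+L})$: the pairing of $\rho_t$ with $\phi_t$ and with the forward object $\delta_{t+1}^{[\lambda_2:\lambda_L],\rho}(\theta)$ on the left is identical to the pairing of $\rho_{t-1}$ with $\phi_{t-1}$ and $\delta_{t}^{[\lambda_2:\lambda_L],\rho}(\theta)$ on the right. Invoking strict stationarity of $\{(s_k,a_k)\}$ under $d^\mu$ and $\mu$ — equivalently, invariance of the finite-dimensional distributions under time translation — the two expectations $\mathbb{E}_\mu[f(\cdot)]$ coincide, which is the assertion.

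The routine parts (boundedness of the integrand, hence well-definedness of both expectations) follow from the finite state--action space, bounded rewards, and bounded importance ratios under the standing assumption that $\mu(a|s)>0$ whenever $\pi(a|s)>0$. The main obstacle is purely bookkeeping: one must check that after unrolling, the left and right integrands are genuinely the \emph{same} function of the trajectory up to the single-step shift — in particular that the index offsets on $\rho$, on $\phi$, and on the nested $\delta/G$ terms all translate consistently. Once this matching is made explicit, stationarity closes the argument immediately.
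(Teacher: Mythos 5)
Your proposal is correct and takes essentially the same route as the paper: the paper likewise unrolls $\delta_{t+1}^{[\lambda_{2}:\lambda_{L}],\rho}(\theta)$ into a finite sum (using $\lambda_j = 0$ for $j>L$) and invokes stationarity of the chain under $d^{\mu}$, via the identity $\mathbb{E}_{\mu}[\rho_{t:t+j}\delta_{t+j}\phi_{t}] = \mathbb{E}_{\mu}[\rho_{t-1:t+j-1}\delta_{t+j-1}\phi_{t-1}]$, to shift every index down by one. The only cosmetic difference is that you shift the whole integrand at once as a single measurable function of the finite trajectory window, while the paper performs the shift term by term before reassembling; both arguments rest on exactly the same stationarity fact.
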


\begin{theorem} 
\label{Forward_to_Backward}
Define the eligibility trace vector $z_{t} = \sum_{i = t-L}^{t}\left[\rho_{t}\left(\Pi_{j=1}^{t-i}\rho_{t-j}\gamma\lambda_j\right)\phi_i\right].$ 
Then,
$\mathbb{E}_{\mu}\left[\delta_{t}^{[\lambda_{1}:\lambda_{L}],\rho}(\theta)\phi_{t}\right] = \mathbb{E}_{\mu}\left[\delta_{t}(\theta)z_{t}\right]$.
\end{theorem}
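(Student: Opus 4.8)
The plan is to prove the identity by peeling off one time-step at a time from the forward-view quantity $\delta_t^{[\lambda_1:\lambda_L],\rho}(\theta)$ and matching each peeled term against one summand of the eligibility trace $z_t$. First I would establish the one-step recursion that drives the argument: substituting $G_{t+1}^{[\lambda_2:\lambda_L],\rho}(\theta) = \delta_{t+1}^{[\lambda_2:\lambda_L],\rho}(\theta) + V_\theta(s_{t+1})$ into the defining recursion for $G_t^{[\lambda_1:\lambda_L],\rho}(\theta)$ and collecting the $V_\theta(s_{t+1})$ terms gives
\[ \delta_t^{[\lambda_1:\lambda_L],\rho}(\theta) = \rho_t\,\delta_t(\theta) + (\rho_t-1)V_\theta(s_t) + \gamma\lambda_1\rho_t\,\delta_{t+1}^{[\lambda_2:\lambda_L],\rho}(\theta). \]
Multiplying by $\phi_t$ and taking $\mathbb{E}_\mu[\cdot]$, the middle cross-term vanishes: conditioning on the history through $s_t$, the factors $V_\theta(s_t)$ and $\phi_t$ are measurable while $\mathbb{E}_\mu[\rho_t \mid s_t] = \sum_a \mu(a|s_t)\tfrac{\pi(a|s_t)}{\mu(a|s_t)} = 1$, so $\mathbb{E}_\mu[(\rho_t-1)V_\theta(s_t)\phi_t] = 0$. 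This isolates the leading term $\mathbb{E}_\mu[\rho_t\delta_t(\theta)\phi_t]$, which is exactly the $i=t$ summand of $\mathbb{E}_\mu[\delta_t(\theta) z_t]$, and leaves the remainder $\mathbb{E}_\mu[\gamma\lambda_1\rho_t\delta_{t+1}^{[\lambda_2:\lambda_L],\rho}(\theta)\phi_t]$.

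Next I would apply Lemma~\ref{index_change_lemma} to the remainder to shift the index down by one, turning it into $\mathbb{E}_\mu[\gamma\lambda_1\rho_{t-1}\delta_t^{[\lambda_2:\lambda_L],\rho}(\theta)\phi_{t-1}]$, and then repeat the decomposition above on $\delta_t^{[\lambda_2:\lambda_L],\rho}(\theta)$, now with $\lambda_2$ in place of $\lambda_1$. This produces the $i = t-1$ summand $\mathbb{E}_\mu[\delta_t(\theta)\,\rho_t\gamma\lambda_1\rho_{t-1}\phi_{t-1}]$, a cross-term that again vanishes after conditioning on the history through $s_t$ (where the accumulated factors $\rho_{t-1},\phi_{t-1}$ are measurable and $\mathbb{E}_\mu[\rho_t\mid s_t]=1$), and a new remainder carrying the extra factor $\gamma\lambda_2\rho_{t-1}$. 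Iterating — alternately shifting the index via a stationarity-shifted instance of Lemma~\ref{index_change_lemma} and peeling off the leading term via the recursion — the $k$-th pass contributes exactly $\mathbb{E}_\mu[\delta_t(\theta)\,\rho_t(\prod_{j=1}^k\gamma\lambda_j\rho_{t-j})\phi_{t-k}]$, which I would check equals the $i = t-k$ summand in the definition of $z_t$. Because $\lambda_j = 0$ for $j > L$, the remainder after $L$ passes carries the factor $\prod_{j=1}^{L+1}\lambda_j = 0$ and vanishes, so the recursion terminates with a finite sum. Re-indexing $i = t-k$ then yields $\mathbb{E}_\mu[\delta_t^{[\lambda_1:\lambda_L],\rho}(\theta)\phi_t] = \mathbb{E}_\mu[\delta_t(\theta)\sum_{i=t-L}^t\rho_t(\prod_{j=1}^{t-i}\gamma\lambda_j\rho_{t-j})\phi_i] = \mathbb{E}_\mu[\delta_t(\theta) z_t]$, as claimed.

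I expect the main obstacle to be the bookkeeping in the iterated index shift. Lemma~\ref{index_change_lemma} is stated for a single shift of the bare object $\rho_t\gamma\lambda_1\delta_{t+1}^{[\lambda_2:\lambda_L],\rho}(\theta)\phi_t$, whereas at the $k$-th pass the remainder also drags along the accumulated product of past importance ratios and a back-shifted feature vector. Justifying the shift there requires invoking the underlying shift-invariance (stationarity of the chain under $\mu$, of which Lemma~\ref{index_change_lemma} is the prototypical one-step instance) and verifying that after each shift the remaining object still has exactly the structural form needed to re-apply the recursion with the next schedule parameter $\lambda_{k+1}$. A clean way to organize this is to prove by induction on $p$ the intermediate identity that $\mathbb{E}_\mu[\delta_t^{[\lambda_1:\lambda_L],\rho}(\theta)\phi_t]$ equals the partial sum of the first $p$ trace summands plus a single remainder term of the stated form, so that each of the two moves — conditioning to kill the $(\rho_t-1)V_\theta(s_t)$ cross-term, and the one-step index shift — is applied exactly once per inductive step.
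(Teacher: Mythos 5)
Your proposal is correct and follows essentially the same route as the paper's proof: the same recursion $\delta_t^{[\lambda_1:\lambda_L],\rho}(\theta)=\rho_t\delta_t(\theta)+(\rho_t-1)\theta^T\phi_t+\rho_t\gamma\lambda_1\delta_{t+1}^{[\lambda_2:\lambda_L],\rho}(\theta)$, the same vanishing of the $(\rho_t-1)$ cross-term against $\sigma(s_t)$-measurable factors, and the same iterated application of Lemma~\ref{index_change_lemma} terminated by $\lambda_j=0$ for $j>L$. Your explicit inductive bookkeeping for the shifted remainder (with the accumulated importance ratios and back-shifted features) is in fact slightly more careful than the paper, which applies the index-change step repeatedly without restating it in that generalized form.
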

Using the above results, we can express the objective function as:
\[J(\theta) = \mathbb{E}_{\mu}\left[\delta_t(\theta)z_t\right]^T \mathbb{E}\left[\phi_t\phi_t^T\right]^{-1}\mathbb{E}\left[\delta_t(\theta)z_t\right], \mbox{ hence, }\]
\begin{equation*}
    \begin{split}
        -\frac{1}{2}\nabla J(\theta) &= -\nabla\mathbb{E}\left[\delta_t(\theta)z_t^{T}\right]\mathbb{E}\left[\phi_t\phi_t^T\right]^{-1}\mathbb{E}\left[\delta_t(\theta)z_t\right] \\ &= \mathbb{E}\left[(\phi_t-\gamma\phi_{t+1})z_t^T\right]\mathbb{E}\left[\phi_t\phi_t^T\right]^{-1}\mathbb{E}\left[\delta_t(\theta)z_t\right].
    \end{split}
\end{equation*}
\[\]
We keep a stationary average for the second and third expectations in a parameter vector $w$ and sample the terms in the first expectation. We call the resultant algorithm {\bf GTD($\lambda$)-schedule} whose iterates are as given below:
\begin{equation}
\label{GTDLambdaSchedule}
\begin{split}
    \theta_{t+1} = \theta_t + \alpha_t\left((\phi_t-\gamma\phi_{t+1})z_t^T w_t\right),\\
    w_{t+1} = w_t + \beta_t\left(\delta_{t}(\theta_t)z_{t}-\phi_t\phi_t^{T}w_{t} \right).
    \end{split}
\end{equation}
Next, as with \cite{Maei_PhD}, an alternative is to express the gradient direction as:
\begin{equation*}
\begin{split}
-\frac{1}{2}\nabla J(\theta) 
 & = \left(\mathbb{E}\left[\phi_t\phi_{t}^{T}\right] + \mathbb{E}\left[(\phi_t-\gamma\phi_{t+1})z_t^T - \phi_t\phi_{t}^{T}\right]\right)\\& \qquad\mathbb{E}\left[\phi_t\phi_t^T\right]^{-1}\mathbb{E}\left[\delta_t(\theta)z_t\right]\\ 
 & = \mathbb{E}\left[\delta_t(\theta)z_t\right] - \left(\mathbb{E}\left[(\gamma\phi_{t+1}-\phi_t)z_{t}^{T} + \phi_t\phi_{t}^{T}\right]\right)\\ & \qquad\left(\mathbb{E}\left[\phi_t\phi_{t}^{T}\right]^{-1}\mathbb{E}\left[\delta_t(\theta)z_t\right]\right).
\end{split}
\end{equation*}
As before, we maintain a stationary estimate for the last two terms and sample the remaining terms to obtain the iterates for {\bf TDC($\lambda$)-schedule}:
\begin{equation}
\label{TDCLambdaScehdule}
    \begin{split}
        \theta_{t+1} & = \theta_t + \alpha_t\left(\delta_t(\theta_{t})z_t\right) \\& - \alpha_t\left((\gamma\phi_{t+1}-\phi_{t})z_{t}^{T}w_{t} + \phi_{t}\phi_{t}^{T}w_{t}\right),\\
        w_{t+1} & = w_t + \beta_t\left(\delta_{t}(\theta_{t})z_{t}-\phi_t\phi_t^{T}w_{t} \right).
    \end{split}
\end{equation}
Appendix A5 compares GTD($\lambda$)-schedule and TDC($\lambda$)-schedule with GTD and TDC.

\section{Convergence Analysis}
\label{convergence}
Our proof technique differs significantly from other references in the asymptotic analysis of our algorithm. 
In particular, we follow the ordinary differential equation (ODE) based analysis under Markov noise for single and multiple timescale algorithms (cf.~\cite{BorkarBook}, \cite{Arunselvan1}, \cite{chandru-SB} and \cite{prasenjit-SB}). We begin with the convergence analysis
of the TD($\lambda$)-Schedule algorithm. 
Starting from some initial state $s_0$, we generate a single infinitely long trajectory $(s_0, s_1, \ldots)$.
Suppose at time $t$, value of the parameter $\theta$ is $\theta_t$.
We consider a linear parameterisation of the value function as 
$V_{\theta}(s_t) = \phi_t^T\theta_t$, where $\phi_{t} \equiv \phi(s_t)$.
After the transition from state $s_t$ to $s_{t+1}$, we evaluate the temporal difference term and update the parameter $\theta_t$ according to
\eqref{tdLambdaSchedule}, assuming the product $\prod_{j=n+1}^{n} = 1$, $\forall n$.

As mentioned above, we only consider $\lambda$ schedules where $\exists L > 0$ such that $\lambda_j = 0$ for all $j > L$.
With such a choice of schedule, we need to store only the last $L$ states. 
We make the following assumptions:
\begin{assumption} 
\label{assumption1}
The step-sizes $\alpha_t$ are positive and satisfy $\sum_t \alpha_t = \infty$ and $\sum_t \alpha_t^2 < \infty$.
\end{assumption}

\begin{assumption}
\label{assumption2}
There exists a distribution $d^{\pi}(j),j\in S$ such that 
\(
    \lim_{t \rightarrow \infty} P(s_t = j|s_0 = i) = d^{\pi}(j) \mbox{         }\forall i,j.
\)
\end{assumption}

\begin{assumption}
\label{assumption3}
The matrix $\Phi$ has full rank, where $\Phi$ is an $n\times d$ matrix where the s$^{th}$ row is $\phi(s)^T$.
\end{assumption}

Let $X_t$ = $(s_{t-L},s_{t-L+1},\ldots,s_{t},s_{t+1})$. Clearly $X_t$ is a Markov chain because $s_{t+2}$ only depends on $s_{t+1}$.
Note that $z_{t}$ is not included in the Markov chain as it can be constructed from $X_{t}$. 
The steady state version of the Markov chain can be constructed from $s_t$, -$\infty < t < \infty$, whose transition probabilities are given by $P$. 
We then let 
\begin{equation*}
    \begin{split}
        z_{t}  = \sum_{\tau = -\infty}^{t}\left(\prod_{j=1}^{t-\tau}\gamma\lambda_{j}\phi(s_{\tau})\right)
         = \sum_{\tau = t-L}^{t}\left(\prod_{j=1}^{t-\tau}\gamma\lambda_j\phi(s_{\tau})\right).
    \end{split}
\end{equation*}
We use $\mathbb{E}_{0}[\cdot]$ to denote the expectation with respect to the steady-state distribution of $X_t$. 
Now, we can write $\delta_tz_t$ as: $\delta_t z_t = A(X_t)\theta_t + b(X_t)$,
where, 
$b(X_t) = z_t R_{t+1}$ and $A(X_t) = z_t(\gamma\phi_{t+1}^{T} - \phi_{t}^T)$. Let $D$ be the diagonal matrix with $d^\pi(s)$, $s\in S$ as it's diagonal elements. Further, let
$A = \mathbb{E}_{0}[A(X_t)]$ and  $b = \mathbb{E}_{0}[b(X_t)]$.




\begin{proposition}
\label{propND}
    The matrix $A$ is negative definite.
\end{proposition}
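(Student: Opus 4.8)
The plan is to put $A$ into closed matrix form using the steady-state feature covariances, recognise the result as $-I$ plus a strictly sub-stochastic operator, and then invoke the classical contraction property of stochastic matrices in the $d^\pi$-weighted norm. First I would expand the trace: writing $k = t-\tau$ and $c_k = \prod_{j=1}^{k}\gamma\lambda_j$ (so $c_0 = 1$ and $c_k = 0$ for $k>L$), we have $z_t = \sum_{k=0}^{L} c_k\,\phi(s_{t-k})$. Under the steady-state law of $X_t$ each $s_{t-k}$ is marginally distributed as $d^\pi$ while the later states evolve by $P$, so $\mathbb{E}_0[\phi(s_{t-k})\phi(s_t)^T] = \Phi^T D P^{k}\Phi$ and $\mathbb{E}_0[\phi(s_{t-k})\phi(s_{t+1})^T] = \Phi^T D P^{k+1}\Phi$. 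Substituting into $A = \mathbb{E}_0[z_t(\gamma\phi_{t+1}^T - \phi_t^T)]$ gives $A = \Phi^T D M \Phi$ with $M = \sum_{k=0}^{L} c_k(\gamma P^{k+1} - P^{k})$.

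Next I would reorganise $M$ by powers of $P$. Using the recursion $c_k = \gamma\lambda_k c_{k-1}$, the coefficient of $P^{m}$ for $1\le m\le L+1$ collapses to $q_m := \gamma c_{m-1}(1-\lambda_m) = \gamma c_{m-1} - c_m \ge 0$ (with the convention $\lambda_{L+1}=0$), while the coefficient of $I$ is $-1$; hence $M = -I + \sum_{m=1}^{L+1} q_m P^{m} =: -I + \widehat{P}$. The crucial quantity is the common row sum of $\widehat{P}$: since $P^{m}\mathbf{1} = \mathbf{1}$ for the all-ones vector $\mathbf{1}$, it equals $\beta := \sum_{m=1}^{L+1} q_m$, and the telescoping sum $\sum_{m=1}^{L+1}(\gamma c_{m-1}-c_m)$ together with $c_{L+1}=0$ yields $\beta = 1 - (1-\gamma)S$, where $S = \sum_{k=0}^{L} c_k$. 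Because $c_k \le \gamma^k$, we have $1 \le S \le \sum_{k\ge 0}\gamma^{k} = (1-\gamma)^{-1}$, so $0 < \beta \le \gamma < 1$. Thus $\widehat{P} = \beta Q$ for a genuine stochastic matrix $Q$, and since $d^\pi$ is invariant for every $P^{m}$ it is invariant for $Q$.

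Finally, fix $x \neq 0$ and set $v = \Phi x$; by Assumption~\ref{assumption3}, $v \neq 0$, and $x^T A x = v^T D M v = -\|v\|_D^2 + \beta\, v^T D Q v$. Applying Jensen's inequality rowwise to $Q$ and using its invariance of $d^\pi$ gives $\|Qv\|_D \le \|v\|_D$, whence $v^T D Q v \le \|v\|_D\,\|Qv\|_D \le \|v\|_D^2$ by Cauchy--Schwarz in the $D$-inner product. Since $\beta < 1$ and $\|v\|_D^2 > 0$ (each $d^\pi(s) > 0$ by irreducibility and Assumption~\ref{assumption2}), we obtain $x^T A x \le -(1-\beta)\|v\|_D^2 < 0$, so the symmetric part of $A$ is negative definite, which is the claim.

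I expect the main obstacle to be the bookkeeping in the second paragraph: correctly collecting the coefficient of each power of $P$ and establishing the telescoping identity $\beta = 1 - (1-\gamma)S$, since this is exactly what forces $\widehat{P}$ to be strictly sub-stochastic. Once $\beta < 1$ is secured, the negative-definiteness reduces to the standard weighted-norm non-expansiveness of a stochastic matrix with respect to its own invariant distribution.
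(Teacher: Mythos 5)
Your proposal is correct and follows essentially the same route as the paper's proof: expand $z_t$, use $\mathbb{E}_0[\phi(s_{t-k})\phi(s_t)^T]=\Phi^T D P^k\Phi$ to write $A=\Phi^T D(\widehat{P}-I)\Phi$ with $\widehat{P}$ a nonnegative combination of powers of $P$ whose coefficients telescope to a sum at most $\gamma<1$, and conclude via non-expansiveness of stochastic matrices in the $\|\cdot\|_D$ norm. The only differences are cosmetic: you compute the exact row sum $\beta=1-(1-\gamma)S$ where the paper uses the cruder bound $\gamma^i\le\gamma$, and you write out the final Jensen/Cauchy--Schwarz step that the paper delegates to Lemma 6.6 of Bertsekas--Tsitsiklis.
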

\begin{proof}
See Appendix A4.
\end{proof}
\subsection{\texorpdfstring{Convergence of TD(\(\lambda\))}--schedule}
We now present a result from Chapter 6 of \cite{BorkarBook} (see also \cite{Arunselvan1}) that gives the stability and convergence of a stochastic approximation recursion under Markov noise. 
Let $\check{S}$ denote the set in which the process $\{X_t\}$ takes values in.
\begin{theorem}
\label{theorem1} Consider the following recursion in $\mathbb{R}^d$:
\begin{equation}
    \label{arunSA}
    \theta_{t+1} = \theta_{t} + \alpha_{t}\left( h(\theta_{t},X_{t}) + M_{t+1}\right).
\end{equation}
Consider now a sequence $\{t(n)\}$ of time points defined as follows:
$t(0)=0$, ${t(n) = \sum_{k=0}^{n-1} \alpha_k}$, $n\geq 1$.
Now define the algorithm's trajectory $\bar{\theta}(t)$ according to: $\bar{\theta}(t(n)) = \theta_n$, $\forall n$, and
with $\bar{\theta}(t)$ defined as a continuous linear interpolation on all intervals $[t(n),t(n+1)]$.
Finally, consider the following assumptions:
\begin{itemize}
    \item[{\bf (B1)}]  $h:\mathbb{R}^d \times \check{S} \rightarrow \mathbb{R}^d$ is Lipschitz continuous in the first argument, uniformly with respect
    to the second.

    \item[{\bf (B2)}] For any given $\theta\in \mathbb{R}^d$, the set $D(\theta)$ of ergodic occupation measures of $\{X_n\}$ is compact and convex.  
    \item[{\bf (B3)}] $\{M_{t}\}_{t \geq 1}$ is a square-integrable martingale difference sequence. 
Further, $\mathbb{E}\left[|| M_{t+1}||^{2}|\mathcal{F}_{t}\right] \leq K(1+||\theta_{t}||^{2})$, where 
$\mathcal{F}_t = \sigma(\theta_{m},X_{m},M_{m},m \leq t)$, $t\geq 0$.
\item[{\bf (B4)}] The step-size sequence $\{\alpha_{t}\}_{t \geq 0}$ satisfies $\alpha_t>0, \forall t$. Further,
$\sum_{t=0}^{\infty}\alpha_{t} = \infty$ and $\sum_{t=0}^{\infty}\alpha_{t}^{2} < \infty$.
\item[{\bf (B5)}] Let ${\displaystyle \tilde{h}(\theta,\nu) = \int h (\theta,x)\nu(dx)}$. Also,
${\displaystyle h_c(\theta,\nu) = \frac{\tilde{h}(c\theta, \nu(c\theta))}{c}}$. 
\begin{itemize}
\item[(i)] The limit ${\displaystyle 
\tilde{h}_\infty(\theta,\nu) = \lim_{c\rightarrow\infty} \tilde{h}_c(\theta,\nu)}$ exists uniformly on compacts.
 \item[(ii)] There exists an attracting set $\mathcal{A}$ associated with the differential inclusion (DI)
 $\dot{\theta}(t) \in H(\theta(t))$ where $H(\theta) = \bar{co}(\{\tilde{h}_\infty(\theta,\nu): \nu\in D(\theta)\})$
 such that $\sup_{u\in \mathcal{A}} ||u|| < 1$ and $\bar{B}_1(0) \stackrel{\triangle}{=} \{x\mid ||x|| \leq 1\}$
 is a fundamental neighborhood of $\mathcal{A}$.
\end{itemize}
\end{itemize}
Under (B1)-(B5), $\{\bar{\theta}(s+\cdot), s\geq 0\}$ converges to an internally chain transitive invariant set of the differential inclusion
\(\dot{\theta}(t) \in \hat{h}(\theta(t)),\)
where $\hat{h}(\theta) = \{\tilde{h}(\theta,\nu)\mid \nu \in D(\theta)\}$. In particular,
$\{\theta_t\}$ converges almost surely to such a set.
\end{theorem}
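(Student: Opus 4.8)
The plan is to follow the differential-inclusion (DI) version of the ODE method for stochastic approximation, separating the argument into a stability stage and a convergence stage. First I would establish almost sure boundedness of the iterates $\{\theta_t\}$, and then, conditioned on boundedness, show that the interpolated trajectory $\bar\theta(\cdot)$ is an asymptotic pseudotrajectory of the inclusion $\dot\theta(t)\in\hat h(\theta(t))$ and invoke a limit-set theorem for DIs to conclude.

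For stability I would use the scaling functions introduced in (B5). The rescaled maps $h_c$ and their uniform limit $\tilde h_\infty$ define, via (B5)(ii), a limiting differential inclusion $\dot\theta\in H(\theta)$ whose attracting set $\mathcal A$ lies strictly inside the unit ball with $\bar B_1(0)$ as a fundamental neighborhood. This is precisely the hypothesis for a Borkar--Meyn type argument adapted to DIs: I would construct the piecewise-rescaled interpolated process, show that over successive rescaling time windows it tracks solutions of $\dot\theta\in H(\theta)$, and conclude that the rescaled iterates are eventually drawn into $\mathcal A\subset B_1(0)$. Since rescaling by $c$ then contracts the iterates, escape to infinity is precluded and $\sup_t\|\theta_t\|<\infty$ almost surely. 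On each window the martingale and Markov-noise contributions must be shown asymptotically negligible, which follows from (B3)--(B4) for the martingale part and from the compactness/convexity in (B2) for the Markov part.

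Given boundedness, the convergence stage decomposes the increment as
\[
h(\theta_t,X_t) = \tilde h(\theta_t,\nu_t) + \big(h(\theta_t,X_t) - \tilde h(\theta_t,\nu_t)\big),
\]
where $\nu_t$ tracks the empirical occupation measure of $\{X_t\}$. The martingale part $\sum_t\alpha_t M_{t+1}$ converges almost surely by the martingale convergence theorem, using the quadratic bound in (B3), $\sum_t\alpha_t^2<\infty$ from (B4), and the established boundedness. The heart of the argument is the Markov noise: because $\alpha_t\to0$, the parameter $\theta$ varies slowly on the natural timescale $t(n)$, so the chain $\{X_t\}$ effectively sees a frozen $\theta$ and its time-averaged occupation measures are asymptotically supported on the set $D(\theta)$ of ergodic occupation measures. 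Together with the Lipschitz continuity in $\theta$ uniform in $X$ from (B1), this shows that over any fixed horizon $\bar\theta(\cdot)$ stays arbitrarily close to some solution of $\dot\theta\in\hat h(\theta)$, i.e., it is an asymptotic pseudotrajectory. Verifying that $\hat h$ is a Marchaud map (nonempty compact convex values and upper semicontinuity from (B2), linear growth from (B1)) lets me invoke the Bena\"im--Hofbauer--Sorin theorem, which yields that the limit set is internally chain transitive and invariant.

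I expect the main obstacle to be the stability stage: proving almost sure boundedness when both Markov noise and a genuinely set-valued mean field are present requires controlling the noise uniformly across all rescaling windows and leveraging the fundamental-neighborhood condition in (B5)(ii) precisely. The Markov-noise averaging in the convergence stage is the next most delicate point, since $\theta$ and $X$ are coupled, but the diminishing-step-size timescale separation makes it tractable once boundedness is in hand.
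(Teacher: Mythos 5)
This theorem is not proved in the paper at all: it is imported verbatim as a known result from Chapter 6 of Borkar's book and from Ramaswamy--Bhatnagar, so there is no in-paper argument to compare against. Your outline is nonetheless a faithful sketch of how those references actually prove it --- the Borkar--Meyn-type rescaling argument driven by (B5) for almost sure boundedness, followed by the asymptotic-pseudotrajectory / internally-chain-transitive limit-set machinery of Bena\"im--Hofbauer--Sorin applied to the Marchaud map $\hat h$, with the Markov noise handled by the slow-parameter averaging onto $D(\theta)$ --- so as a proof plan it is aligned with the source the authors cite; turning it into a complete proof would require filling in the uniform noise estimates across rescaling windows and the tightness/averaging lemma for the occupation measures, which is exactly where the cited works do the heavy lifting.
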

We now present our main result on the TD($\lambda$)-schedule algorithm.

\begin{theorem}
\label{TDLS}
Under Assumptions 1--3, the TD($\lambda$)-schedule algorithm given by \eqref{tdLambdaSchedule} satisfies $\theta_t \rightarrow \theta^*\stackrel{\triangle}{=}
-A^{-1}b$ almost surely as $t\rightarrow\infty$.
\end{theorem}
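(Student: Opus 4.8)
The plan is to cast the TD($\lambda$)-schedule recursion \eqref{tdLambdaSchedule} into the abstract form \eqref{arunSA} and then invoke Theorem \ref{theorem1}. Using the exact algebraic identity $\delta_t z_t = A(X_t)\theta_t + b(X_t)$, I would set $h(\theta, X_t) = A(X_t)\theta + b(X_t)$ and $M_{t+1} \equiv 0$, since the update carries no randomness beyond that of the Markov process $\{X_t\}$. With this identification, the averaged vector field over the unique stationary distribution $\nu^*$ of $\{X_t\}$ is $\tilde{h}(\theta, \nu^*) = A\theta + b$, and the goal is to show its globally asymptotically stable equilibrium is exactly $\theta^* = -A^{-1}b$.

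First I would verify hypotheses (B1)--(B4). For (B1), since $\mathcal{S}$ is finite, the eligibility trace $z_t$ is a finite sum of bounded feature vectors and the rewards are bounded, so $A(x)$ and $b(x)$ are uniformly bounded; hence $h(\theta, x) = A(x)\theta + b(x)$ is affine in $\theta$ with Lipschitz constant $\sup_x \|A(x)\| < \infty$, uniformly in $x$. Assumption \ref{assumption2} guarantees a unique stationary distribution for the finite, irreducible, aperiodic chain $\{X_t\}$, so the set $D(\theta)$ of ergodic occupation measures is the singleton $\{\nu^*\}$, which is trivially compact and convex, giving (B2). Condition (B3) holds vacuously because $M_{t+1} \equiv 0$, and (B4) is precisely Assumption \ref{assumption1}.

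The main work lies in (B5), the stability and scaling condition. Because $h(\theta, x) = A(x)\theta + b(x)$ is affine and $D(\theta) = \{\nu^*\}$ is independent of $\theta$, the rescaled field $h_c(\theta, \nu^*) = \tilde{h}(c\theta, \nu^*)/c = A\theta + b/c$ converges, as $c \to \infty$, uniformly on compacts to $\tilde{h}_\infty(\theta, \nu^*) = A\theta$, where I use $A = \int A(x)\,\nu^*(dx)$; this establishes (B5)(i). Since $D(\theta) = \{\nu^*\}$, the limiting differential inclusion collapses to the linear ODE $\dot{\theta} = A\theta$. By Proposition \ref{propND}, $A$ is negative definite, so the origin is the globally asymptotically stable equilibrium of $\dot{\theta} = A\theta$; taking $\mathcal{A} = \{0\}$ makes $\bar{B}_1(0)$ a fundamental neighborhood, which verifies (B5)(ii) and, through Theorem \ref{theorem1}, yields almost-sure boundedness of $\{\theta_t\}$. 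I expect the scaling argument, together with the verification that negative definiteness of $A$ delivers the required attracting set, to be the most delicate step, though it follows cleanly once Proposition \ref{propND} is in hand.

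Finally, having verified (B1)--(B5), Theorem \ref{theorem1} guarantees that $\{\theta_t\}$ converges almost surely to an internally chain transitive invariant set of the ODE $\dot{\theta} = \tilde{h}(\theta, \nu^*) = A\theta + b$. Negative definiteness of $A$ implies that $A$ is invertible, so this ODE has the unique equilibrium $\theta^* = -A^{-1}b$, and rewriting $\dot{\theta} = A(\theta - \theta^*)$, the Lyapunov function $V(\theta) = (\theta - \theta^*)^T(\theta - \theta^*)$ satisfies $\dot{V} = 2(\theta - \theta^*)^T A(\theta - \theta^*) < 0$ for $\theta \neq \theta^*$, making $\theta^*$ globally asymptotically stable. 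Consequently $\{\theta^*\}$ is the only internally chain transitive invariant set, and therefore $\theta_t \to \theta^* = -A^{-1}b$ almost surely, as claimed.
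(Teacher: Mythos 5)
Your proposal is correct and follows essentially the same route as the paper: rewrite the update as $\theta_{t+1}=\theta_t+\alpha_t(A(X_t)\theta_t+b(X_t))$, verify (B1)--(B5) with $M_{t+1}\equiv 0$ and $D(\theta)$ a singleton, invoke Proposition~\ref{propND} for the scaled ODE $\dot{\theta}=A\theta$, and then identify $\{\theta^*\}=\{-A^{-1}b\}$ as the unique internally chain transitive invariant set of $\dot{\theta}=A\theta+b$ via a Lyapunov argument. The only cosmetic difference is your choice of Lyapunov function $(\theta-\theta^*)^T(\theta-\theta^*)$ in place of the paper's $\frac{1}{2}(A\theta+b)^T(A\theta+b)$; both work equally well given that $A$ is negative definite.
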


\begin{proof}
We first transform the iterate for the TD($\lambda$)-schedule algorithm given by \eqref{tdLambdaSchedule} in the standard stochastic approximation form given by \eqref{arunSA}.
Note that the algorithm \eqref{tdLambdaSchedule} can be rewritten as
\begin{equation}
    \label{tdLnew}
    \theta_{t+1} = \theta_t + \alpha_t(A(X_t)\theta_t + b(X_t)).
\end{equation}
Thus, $h(\theta_t,X_t) = A(X_t)\theta_t + b(X_t)$.
For any $\theta_1,\theta_2\in \mathbb{R}^d$ and $X\in \check{S}$,
\begin{equation*}
    \begin{split}
        \parallel h(\theta_1,X)-h(\theta_2,X)\parallel &\leq \parallel A(X)(\theta_1-\theta_2)\parallel \\ &\leq \parallel A(X)\parallel \parallel \theta_1-\theta_2\parallel.
    \end{split}
\end{equation*}
Since the set $S$ is finite, $\exists 0<M<\infty$ such that $\sup_t ||\phi_t|| \leq M$, thus $z_t$ is bounded.
Hence, $\exists 0<B<\infty$ such that $\sup_{X\in S} ||A(X)|| \leq B$
and (B1) holds.
From Assumption 2, the Markov chain $\{s_t\}$ has a unique stationary distribution. 
Hence, the Markov chain $\{X_t\}$ also has a unique stationary distribution $\nu$. 
Since the transition probabilities of $\{X_t\}$ do not depend on $\theta$, we have
$D(\theta)\equiv D =\{\nu\}$, a singleton, for all $\theta\in \mathbb{R}^d$. The set $D$ is trivially compact and convex, establishing (B2).
Now upon comparison of \eqref{tdLnew} with \eqref{arunSA}, we observe that
$M_{t+1}= 0$, $\forall t$ in \eqref{tdLnew}. Thus, (B3) is trivially verified as well
with $\mathcal{F}_{t} = \sigma(\theta_{0},\{s_{m},R_{m}\}_{m\leq t})$, $t\geq 0$.
Further, the requirement (B4) on step-sizes is satisfied from Assumption 1.
We finally consider the requirement (B5). It is easy to see
that
$\tilde{h}(\theta,\nu) = A\theta + b$.
Further,
${\displaystyle \tilde{h}_c(\theta,\nu) = \frac{\tilde{h}(c\theta,\nu(c\theta))}{c} = A\theta + \frac{b}{c}}$.
Finally, it is easy to see that
$\tilde{h}_c(\theta,\nu) \rightarrow \tilde{h}_\infty(\theta,\nu) = A\theta$, as
$c\rightarrow \infty$.
The differential inclusion $\dot{\theta}(t) \in H(\theta)$ now corresponds to
the ODE
\begin{equation}
    \label{ode-is}
\dot{\theta}(t) = \tilde{h}_\infty(\theta,\nu) = A\theta, 
\end{equation}
since as described above, the set $D$ of stationary probability measures is a singleton that does not change with $\theta$. From Proposition~\ref{propND}, $A$ is negative definite.
Hence, the ODE \eqref{ode-is} has the origin as it's unique globally asymptotically stable equilibrium. Thus, $\mathcal{A} =\{0\}$ serves as the attractor for \eqref{ode-is} with $\bar{B}_1(0)$ or for that matter any closed subset of $\mathbb{R}^d$ containing the origin as it's fundamental neighborhood. 
Thus, by Theorem~\ref{theorem1}, we have that $\{\theta_t\}$ converges almost surely to an internally chain transitive invariant set of the ODE 
\begin{equation}
    \label{ode-f1}
\dot{\theta}(t) = \tilde{h}(\theta,\nu)
= A\theta+b.
\end{equation}
Consider now the function ${\displaystyle V(\theta) = \frac{1}{2} (A\theta+b)^T
(A\theta+b)}$. For this function,
\(\frac{dV(\theta)}{dt} = \nabla V(\theta)^T \dot{\theta} = (A\theta+b)^T A^T (A\theta+b).\)
Since $A$ is negative definite (cf.~Proposition~\ref{propND}), it follows that ${\displaystyle \frac{dV(\theta)}{dt} <0}$ if $\theta\not=\theta^*$
and ${\displaystyle \frac{dV(\theta)}{dt} =0}$ otherwise.
Thus, $V:\mathbb{R}^d\rightarrow \mathbb{R}$ serves as a Lyapunov function
for the ODE \eqref{ode-f1} with $\theta^*=-A^{-1}b$ as it's unique globally asymptotically
stable attractor. The singleton set $\{\theta^*\}$ is trivially connected and internally chain recurrent and so by Proposition 5.3 of \cite{Benaim99} is also internally chain transitive. 
It is also the only invariant set for the ode (\ref{ode-f1}). 
Thus, by Theorem ~\ref{theorem1}, $\theta_t\rightarrow\theta^*$ w.p.~1 as $t\rightarrow\infty$.
\end{proof}

\subsection{ \texorpdfstring{Convergence of GTD(\(\lambda\))}--schedule and \texorpdfstring{TDC(\(\lambda\))}--schedule}
We make the following assumption for the convergence analysis of GTD($\lambda$)-schedule.
\begin{assumption}
The step-size sequence $\beta_t$ satisfies $\beta_{t} > 0,$ $\forall t,$ $\sum_{t}\beta_t = \infty$ and $\sum_{t}\beta_{t}^2<\infty$. Further, we assume \(\frac{\beta_k}{\alpha_k} = \eta\;\; \forall k \geq 0.\)
\end{assumption}
\begin{theorem}
\label{gtd_thm}
Under Assumptions 1-4, $\{\theta_{t}\}$ in the GTD($\lambda$)-Schedule iterate given in equation \eqref{GTDLambdaSchedule} converges almost surely to $-A^{-1}b$.
\end{theorem}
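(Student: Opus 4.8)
The plan is to stack the two iterates into a single vector $y_t = (\theta_t^T, w_t^T)^T \in \mathbb{R}^{2d}$ and exploit Assumption 4 (which fixes $\beta_t = \eta\alpha_t$) to recognize GTD($\lambda$)-schedule as a \emph{single}-timescale stochastic approximation driven by the same Markov process $\{X_t\}$ used for TD($\lambda$)-schedule. Using $\delta_t(\theta_t)z_t = A(X_t)\theta_t + b(X_t)$ together with the identity $(\phi_t-\gamma\phi_{t+1})z_t^T = -A(X_t)^T$, the recursion \eqref{GTDLambdaSchedule} rewrites as $y_{t+1} = y_t + \alpha_t h(y_t,X_t)$ with
\begin{equation*}
h(y,X_t) = \begin{bmatrix} -A(X_t)^T w \\ \eta\big(A(X_t)\theta + b(X_t) - \phi_t\phi_t^T w\big) \end{bmatrix}.
\end{equation*}
This matches the form \eqref{arunSA} exactly, so I would apply Theorem \ref{theorem1} to $y_t$.

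The verification of (B1)--(B4) is immediate and mirrors the proof of Theorem \ref{TDLS}: $h$ is affine in $y$ with coefficients bounded uniformly in $X_t$ (finiteness of $\{s_t\}$ makes $z_t,\phi_t$ bounded), giving (B1); the transition kernel of $X_t$ is independent of $y$, so $D(y)\equiv\{\nu\}$ is a compact convex singleton, giving (B2); the recursion carries no additional noise, so $M_{t+1}\equiv 0$ and (B3) holds trivially; (B4) follows from Assumptions 1 and 4. Writing $C = \mathbb{E}_0[\phi_t\phi_t^T]$, the averaged field is $\tilde h(y) = Gy + g$ with
\begin{equation*}
G = \begin{bmatrix} 0 & -A^T \\ \eta A & -\eta C \end{bmatrix}, \qquad g = \begin{bmatrix} 0 \\ \eta b \end{bmatrix},
\end{equation*}
and the scaling limit of (B5) is $\tilde h_\infty(y) = Gy$, so the entire argument reduces to the stability of the single matrix $G$.

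The crux — and the only nontrivial step — is showing that $G$ is Hurwitz. I would argue directly on eigenvalues: if $Gv = \mu v$ with $v=(v_1,v_2)$, the second block forces $v_2\neq 0$ (otherwise $Av_1 = 0$, and invertibility of $A$ from Proposition \ref{propND} gives $v_1 = 0$). Taking Hermitian inner products of the two block-rows against $v_1$ and $v_2$, using that $A$ is real so that $v_2^* A v_1 = \overline{v_1^* A^T v_2}$, and eliminating the cross term yields
\begin{equation*}
\mathrm{Re}(\mu) = \frac{-\eta\, v_2^* C v_2}{\|v_2\|^2 + \eta\|v_1\|^2} < 0,
\end{equation*}
where $v_2^* C v_2 > 0$ because Assumption 3 (full rank $\Phi$) makes $C$ positive definite and $v_2\neq 0$. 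Hence every eigenvalue of $G$ has strictly negative real part.

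With $G$ Hurwitz, the scaled ODE $\dot y = Gy$ has the origin as its globally asymptotically stable equilibrium, so $\mathcal{A} = \{0\}$ with $\bar B_1(0)$ as fundamental neighborhood, completing (B5). Theorem \ref{theorem1} then gives $y_t \to y^*$ almost surely, where $y^*$ is the unique zero of $Gy+g$; solving the block equations gives $w^* = 0$ and $A\theta^* + b = 0$, i.e. $\theta^* = -A^{-1}b$, so projecting onto the $\theta$-coordinates yields $\theta_t \to -A^{-1}b$ almost surely. I expect the Hurwitz computation to be the main obstacle: the asymmetric block structure of $G$ precludes a one-line symmetric-definiteness argument and forces either the eigenvalue calculation above or, equivalently, the construction of a suitable quadratic Lyapunov function.
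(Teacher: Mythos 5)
Your proposal is correct and follows essentially the same route as the paper's proof in Appendix~A4.2: both collapse the two iterates into a single $2d$-dimensional linear stochastic approximation driven by $\{X_t\}$ (the paper uses the rescaled stack $\xi_t^T=(w_t^T/\sqrt{\eta},\,\theta_t^T)$), verify (B1)--(B5) of Theorem~\ref{theorem1}, and read off $\theta^*=-A^{-1}b$ from the block structure of the equilibrium. The only substantive difference is that where the paper simply cites \citet{Maei_PhD} for the claim that the mean matrix $G$ is ``negative definite,'' you carry out the eigenvalue computation showing $G$ is Hurwitz explicitly --- which is arguably the more accurate formulation, since the quadratic form $x^TGx$ of the paper's rescaled matrix equals $-\sqrt{\eta}\,x_1^TCx_1$ and hence vanishes on the subspace $\{x_1=0\}$, so the spectral argument (or an equivalent Lyapunov-equation construction) is what is actually needed.
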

\begin{proof}
See Appendix A4 for proof of Theorem \ref{gtd_thm}.
\end{proof}
We now make the following assumption for the convergence analysis of TDC($\lambda$)-schedule.
\begin{assumption}
The step-size sequence $\beta_t$ satisfies $\beta_{t} > 0,$ $\forall t,$ $\sum_{t}\beta_t = \infty$ and $\sum_{t}\beta_{t}^2<\infty$. Further, we assume, \(\frac{\alpha_k}{\beta_k}\rightarrow0 \mbox{ as } k\rightarrow\infty.\)
\end{assumption}
The TDC($\lambda$)-schedule update rule can be rewritten in the form:
\begin{equation}
\label{TDCLambdaSchedule1}
    \begin{split}
        \theta_{t+1} & = \theta_t + \alpha_t h(\theta_t,w_t,X_t),\\
        w_{t+1} & = w_t + \beta_t g(\theta_t,w_t,X_t),
    \end{split}
\end{equation}
where $h:\mathbb{R}^d\times\mathbb{R}^d\times \check{S} \rightarrow \mathbb{R}^d$ and $g:\mathbb{R}^d\times\mathbb{R}^d\times \check{S} \rightarrow \mathbb{R}^d$ are defined as
$h(\theta_t,w_t,X_t) = A(X_t)\theta_t + b(X_t) - A(X_t)^{T} w_t - C(X_t) w_t$ and
$g(\theta_t,w_t,X_t) = A(X_t)\theta_t + b(X_t) - C(X_t)w_t$, respectively.
Our analysis here is based on stability and convergence results of two-timescale stochastic approximation from \citet{chandru-SB} and \citet{prasenjit-SB}. 

Define functions $\bar{h}, \bar{g}: \mathbb{R}^d\times\mathbb{R}^d\rightarrow \mathbb{R}^d$ according to
\[\bar{h}(\theta,w) = \int h(\theta,w,X)\nu(dX) = A\theta + b-A^Tw-Cw, \]
\[\bar{g}(\theta,w) = \int g(\theta,w,X)\nu(dX) = A\theta+b-Cw, \]
respectively, with $A, b, C$ as before.
We shall first present below the main result for which we need the following assumptions:

\begin{itemize}
    \item[{\bf (C1)}] The functions $h(\theta,w,X)$ and $g(\theta,w,X)$ are Lipschitz continuous in $(\theta,w)$ for given $X\in \check{S}$.
    \item[{\bf (C2)}] $\{\alpha_t\}$ and $\{\beta_t\}$ are step-size schedules that satisfy: $\alpha_t,\beta_t>0$, $\forall t$,
    \(\sum_t\alpha_t =\sum_t\beta_t=\infty, \)\( \sum_t(\alpha^2_t+\beta^2_t)<\infty\), \({\displaystyle\lim_{t\rightarrow\infty} \frac{\alpha_t}{\beta_t}=0}.\)
    \item[{\bf (C3)}] The sequence of functions
    ${\displaystyle \bar{g}_c(\theta,w) \stackrel{\triangle}{=} \frac{\bar{g}(c\theta, cw)}{c}}$, $c\geq 1$, satisfy $\bar{g}_c\rightarrow \bar{g}_\infty$ uniformly on compacts for some $\bar{g}_\infty:\mathbb{R}^d\rightarrow\mathbb{R}^d$. Also, the limiting ODE $\dot{w}(t) =\bar{g}_\infty(\theta, w(t))$, i.e., with $\theta(t)\equiv\theta$, has a unique globally asymptotically stable equilibrium $\lambda_\infty(\theta)$ where $\lambda_\infty:\mathbb{R}^d\rightarrow\mathbb{R}^{d}$ is Lipschitz continuous. Further, $\lambda_\infty(0)=0$, i.e., $\dot{w}(t) = \bar{g}_\infty(0,w(t))$ has the origin in $\mathbb{R}^{d}$ as it's unique globally asymptotically stable equilibrium. 
    \item[{\bf (C4)}] The functions $\bar{h}_c(\theta) \stackrel{\triangle}{=} \frac{\bar{h}(c\theta, c\lambda_\infty(\theta))}{c}$, $c\geq 1$ satisfy $\bar{h}_c\rightarrow \bar{h}_\infty$ as $c\rightarrow\infty$ uniformly on compacts for some $\bar{h}_\infty:\mathbb{R}^d\rightarrow \mathbb{R}^d$. Also, the limiting ODE $\dot{\theta}(t) = \bar{h}_\infty(\theta(t))$ has the origin in $\mathbb{R}^d$ as it's unique globally asymptotically stable equilibrium. 
    \item[{\bf (C5)}] The ODE $\dot{w}(t) = \bar{g}(\theta,w(t))$ has a globally asymptotically stable equilibrium $\lambda(\theta)$ (uniformly in $\theta$), where $\lambda: \mathbb{R}^d\rightarrow \mathbb{R}^{d}$ is Lispchitz continuous. \item[{\bf (C6)}] The ODE
    $\dot{\theta}(t) = \bar{h}(\theta(t),\lambda(\theta(t)))$ has a globally asymptotically stable equilibrium $\theta^*$.
\end{itemize}

We now state the key result.
\begin{theorem}
\label{MainResult}
Under (C1)-(C6), the recursions \eqref{TDCLambdaSchedule1} satisfy:
(a) $\sup_{n} (||\theta_n|| + ||w_n||)<\infty$ and (b) $(\theta_n,w_n) \rightarrow (\theta^*, \lambda(\theta^*))$ almost surely.
\end{theorem}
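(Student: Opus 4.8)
The plan is to recognize Theorem~\ref{MainResult} as an instance of the general two-timescale stochastic approximation framework with Markov noise, and to establish parts (a) and (b) in sequence: first boundedness of the iterates via a Borkar--Meyn-type scaling argument adapted to two timescales (the stability result of \citet{chandru-SB}), and then almost sure convergence via the ODE method applied separately on the fast and slow timescales (following \citet{prasenjit-SB}). Throughout I would exploit the crucial structural fact that the driving Markov chain $\{X_t\}$ has transition probabilities that do not depend on $(\theta,w)$, so its set of ergodic occupation measures collapses to the singleton $\{\nu\}$. Consequently the averaged vector fields $\bar h(\theta,w)=\int h(\theta,w,X)\nu(dX)$ and $\bar g(\theta,w)=\int g(\theta,w,X)\nu(dX)$ are genuine single-valued ODE right-hand sides rather than differential inclusions, which simplifies the analysis considerably.

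For part (a), stability, I would invoke the two-timescale stability theorem of \citet{chandru-SB}. The scaling functions $\bar g_c$ and $\bar h_c$ in (C3)--(C4) are precisely the nested Borkar--Meyn criterion: along the fast timescale the scaled iterate is governed in the limit by $\dot w=\bar g_\infty(\theta,w)$, which by (C3) has the unique globally asymptotically stable equilibrium $\lambda_\infty(\theta)$ with $\lambda_\infty(0)=0$; substituting this into the scaled slow field yields $\dot\theta=\bar h_\infty(\theta)$, which by (C4) has the origin as its unique globally asymptotically stable equilibrium. These two scaling limits, together with the Lipschitz growth in (C1) and the square-summability of step-sizes in (C2), are exactly the hypotheses under which $\sup_n(\|\theta_n\|+\|w_n\|)<\infty$ almost surely. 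The Markov-noise and martingale-difference contributions are controlled to be asymptotically negligible under (C1)--(C2), as in the cited stability analysis.

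For part (b), convergence, I would run the standard two-timescale ODE argument, now justified by the boundedness from (a). Since $\alpha_t/\beta_t\to0$ by (C2), the slow variable $\theta_t$ is quasi-static on the fast timescale, so the Markov-noise tracking lemma shows that $w_t$ asymptotically follows the fast ODE $\dot w=\bar g(\theta,w)$ with $\theta$ held fixed. By (C5) this ODE has the globally asymptotically stable equilibrium $\lambda(\theta)$ uniformly in $\theta$, whence $\|w_t-\lambda(\theta_t)\|\to0$ almost surely. Feeding this back into the slow recursion, $\theta_t$ asymptotically tracks $\dot\theta=\bar h(\theta,\lambda(\theta))$, which by (C6) has $\theta^*$ as its globally asymptotically stable equilibrium; therefore $\theta_t\to\theta^*$ and consequently $w_t\to\lambda(\theta^*)$ almost surely, establishing (b).

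The hard part will be the stability claim (a): verifying the nested scaling limits and invoking the two-timescale Borkar--Meyn criterion in the presence of Markov noise is delicate, since one must control the fast iterate's excursions uniformly while the slow iterate is itself being rescaled. Once boundedness is secured, the convergence in (b) is a comparatively routine application of the two-timescale ODE method. A secondary point is the justification of Markov averaging, i.e.\ that the empirical contribution of $\{X_t\}$ converges to its stationary average; this is immediate here because the law of $\{X_t\}$ is exogenous to $(\theta,w)$, giving the singleton occupation measure $\{\nu\}$ and removing any differential-inclusion complications that would otherwise arise in the general theory of \cite{BorkarBook}.
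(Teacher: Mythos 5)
Your proposal takes essentially the same route as the paper: the paper's own justification (given as Remark 2 in Appendix A6) likewise establishes part (a) by appealing to the two-timescale stability theorem of \citet{chandru-SB} under the nested scaling conditions (C3)--(C4), noting that the argument there is stated for martingale-difference noise but carries over to Markov noise as in \citet{Arunselvan1} and \citet{BorkarBook}, and establishes part (b) by invoking the two-timescale Markov-noise convergence result of \citet{prasenjit-SB} under (C1), (C2), (C5), (C6) together with iterate stability. Your additional observation that the driving chain is exogenous to $(\theta,w)$, so the occupation measure is the singleton $\{\nu\}$ and the limiting dynamics are genuine ODEs rather than differential inclusions, is exactly the simplification the paper relies on when it later verifies (C1)--(C6).
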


We now have our main result for the TDC($\lambda$)-schedule algorithm.
\begin{theorem}\label{thm4.5}
Under Assumptions 1-3 and 5, the TDC($\lambda$)-schedule algorithm given by \eqref{TDCLambdaScehdule} satisfies $\theta_{t}\rightarrow\theta^*=-A^{-1}b$ almost surely as $t\rightarrow\infty$.
\end{theorem}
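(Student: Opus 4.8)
The plan is to recognise the TDC($\lambda$)-schedule recursion \eqref{TDCLambdaScehdule} as the two-timescale stochastic approximation scheme \eqref{TDCLambdaSchedule1} with Markov noise $X_t$, and then to invoke Theorem~\ref{MainResult} by verifying its hypotheses (C1)--(C6) for the averaged vector fields $\bar{h}(\theta,w)=A\theta+b-A^Tw-Cw$ and $\bar{g}(\theta,w)=A\theta+b-Cw$. Two structural facts drive everything: first, matching \eqref{TDCLambdaScehdule} with \eqref{TDCLambdaSchedule1} identifies $C(X_t)=\phi_t\phi_t^T$, so that $C=\mathbb{E}_0[\phi_t\phi_t^T]=\Phi^TD\Phi$ is \emph{positive} definite by Assumption~3 (full-rank $\Phi$ and strictly positive steady-state weights); second, $A$ is \emph{negative} definite, hence nonsingular, by Proposition~\ref{propND}. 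Because the transition law of $\{X_t\}$ does not depend on $(\theta,w)$ and, by Assumption~2, admits the unique stationary distribution $\nu$, the ergodic-averaging and Markov-noise bookkeeping are already subsumed in Theorem~\ref{MainResult}; the task reduces to checking the six deterministic conditions on $\bar{h},\bar{g}$.

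The routine conditions come first. For (C1), finiteness of $\mathcal{S}$ bounds $\phi_t$ and hence $z_t$, so $A(X_t),C(X_t),b(X_t)$ are uniformly bounded and $h,g$ are affine in $(\theta,w)$ with bounded coefficients, giving Lipschitz continuity uniformly in $X$. Condition (C2) is immediate from Assumptions~1 and~5, the latter supplying $\alpha_t/\beta_t\to0$ so that $w$ is the fast and $\theta$ the slow iterate. For the fast timescale I solve $\bar{g}(\theta,w)=0$ to obtain $\lambda(\theta)=C^{-1}(A\theta+b)$; since $-C$ is Hurwitz ($C$ positive definite), the ODE $\dot{w}=\bar{g}(\theta,w)$ is globally asymptotically stable at $\lambda(\theta)$ uniformly in $\theta$, and $\lambda$ is affine hence Lipschitz, establishing (C5). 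The scaled limit $\bar{g}_\infty(\theta,w)=A\theta-Cw$ has equilibrium $\lambda_\infty(\theta)=C^{-1}A\theta$ with $\lambda_\infty(0)=0$, giving (C3).

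The crux is the slow timescale, conditions (C4) and (C6). Substituting $w=\lambda_\infty(\theta)=C^{-1}A\theta$ into the scaled $\bar{h}$, the $A\theta$ and $-Cw$ contributions cancel and leave $\bar{h}_\infty(\theta)=-A^TC^{-1}A\theta$; likewise substituting $w=\lambda(\theta)$ yields $\bar{h}(\theta,\lambda(\theta))=-A^TC^{-1}(A\theta+b)$. The key observation is that $A^TC^{-1}A$ is positive definite: $C^{-1}$ is positive definite and $A$ is nonsingular, so $x^TA^TC^{-1}Ax=(Ax)^TC^{-1}(Ax)>0$ for $x\neq0$. Hence $-A^TC^{-1}A$ is Hurwitz, the origin is the unique globally asymptotically stable equilibrium of $\dot{\theta}=\bar{h}_\infty(\theta)$ (yielding (C4)), and for (C6) the change of variables $e=\theta-\theta^*$ with $\theta^*=-A^{-1}b$ reduces $\dot{\theta}=\bar{h}(\theta,\lambda(\theta))$ to $\dot{e}=-A^TC^{-1}Ae$, for which $V(e)=\tfrac12\|e\|^2$ is a strict Lyapunov function, so $\theta^*$ is globally asymptotically stable.

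With (C1)--(C6) in hand, Theorem~\ref{MainResult} gives boundedness of $\{(\theta_t,w_t)\}$ together with the almost sure convergence $(\theta_t,w_t)\to(\theta^*,\lambda(\theta^*))$, whence $\theta_t\to\theta^*=-A^{-1}b$ as claimed. I expect the main obstacle to be verifying the Hurwitz property of the effective slow-timescale matrix $-A^TC^{-1}A$: this is precisely where Proposition~\ref{propND} (negative definiteness, thus invertibility, of $A$) must be combined with the positive definiteness of $C$ from Assumption~3, and where one must first confirm the exact cancellations that collapse the coupled dynamics to the clean reduced form $\dot{e}=-A^TC^{-1}Ae$.
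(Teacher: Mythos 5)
Your proposal is correct and follows essentially the same route as the paper's proof: cast \eqref{TDCLambdaScehdule} in the form \eqref{TDCLambdaSchedule1}, verify (C1)--(C6) using the boundedness from finiteness of $\mathcal{S}$, the positive definiteness of $C=\Phi^TD\Phi$ (Assumption 3), and the negative definiteness of $A$ (Proposition~\ref{propND}), then invoke Theorem~\ref{MainResult} to conclude $(\theta_t,w_t)\to(-A^{-1}b,0)$. The only cosmetic difference is your Lyapunov function for (C6) — $V(e)=\tfrac12\|e\|^2$ after centering at $\theta^*$ rather than the paper's quadratic in the residual $A^TC^{-1}(A\theta+b)$ — which is equally valid since $A^TC^{-1}A$ is symmetric positive definite.
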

\begin{proof}

From Theorem~\ref{MainResult}, we need to verify that conditions (C1)-(C6) hold.
Note that by definition,
\[ ||h(\theta_2,w_2,X)-h(\theta_1,w_1,X)|| \leq
||A(X)||||\theta_2-\theta_1||\]
\[ + ||A(X)^T|| ||w_2-w_1|| + ||C(X)|| ||w_2-w_1||.
\]
As with the proof of Theorem 4.3, we have that $\sup_{X\in \check{S}} ||A(X)|| \leq B<\infty$. Similarly, since $\phi_t =\phi(s_t)$ with $s_t\in S$, a finite set, $\parallel C(X_t)\parallel$ is also uniformly upper bounded. 
Without loss of generality, let $\sup_{X\in \check{S}} \parallel C(X)\parallel \leq B$ as well. Also, by Proposition A.25 of \citet{BertTsit}, $\parallel A(X)\parallel = \parallel A(X)^T\parallel \leq B$ for Euclidean norms. It is easy to see now that $h(\theta,w,X)$ is Lipschitz continuous in $(\theta,w)$ for given $X\in\check{S}$. The proof that $g(\theta,w,X)$ is Lispchitz continuous in $(\theta,w)$ for given $X\in\check{S}$ is analogous. Thus (C1) holds.

Condition (C2) is just a combination of Assumptions 1 and 5 and thus holds.

For (C3), note that $\bar{g}(\theta,w) = A\theta+b -Cw$. 
Also, $\bar{g}_c(\theta,w) = A\theta-Cw + \frac{b}{c}$. 
We have that $\bar{g}_c(\theta,w) \rightarrow \bar{g}_\infty(\theta,w)=A\theta-Cw$ as $c\rightarrow\infty$. 
Consider now the ODE $\dot{w}(t) = \bar{g}_\infty(\theta,w(t)) = A\theta-Cw(t)$. 
Now notice that $C=\mathbb{E}_0[\phi_t\phi_t^T] = \sum_s d^\pi(s) \phi(s)\phi(s)^T = \Phi^T D\Phi$. 
Since $D$ is a diagonal matrix with positive diagonal elements, it is positive definite. 
Moreover, by Assumption 3, $\Phi$ has full rank. Thus, $C$ is positive definite as well. 
It is now easy to see that
$V_1(w) = (A\theta-Cw)^T(A\theta-Cw)/2$, $w\in \mathbb{R}^d$ serves as a Lyapunov function for $\dot{w}(t) = A\theta-Cw(t)$ with
$\lambda_\infty(\theta) = C^{-1}A\theta$ as the globally asymptotically stable attractor.
Now $\lambda_\infty(\theta)$ is a linear function of $\theta$, hence Lipschitz continuous. Further, $\lambda_\infty(0)=0$, whereby the ODE $\dot{w}=\bar{g}_\infty(0,w(t))$ has the origin in $\mathbb{R}^d$ as it's unique globally asymptotically stable equilibrium. Thus, (C3) is verified.

For verifying (C4), consider the functions 
\[ \bar{h}_c(\theta) = \frac{\bar{h}(c\theta,c\lambda_\infty(\theta))}{c} = 
A\theta -A^T C^{-1}A\theta - CC^{-1}A\theta + \frac{b}{c}\]
\[= - A^T C^{-1} A\theta + \frac{b}{c} \longrightarrow - A^T C^{-1}A\theta \stackrel{\triangle}{=}
\bar{h}_\infty(\theta) \mbox{ as } c\rightarrow \infty.
\]
Consider now the ODE
\begin{equation}
    \label{ode-f}
\dot{\theta}(t) = \bar{h}_\infty(\theta(t)) = -A^T C^{-1} A\theta.
\end{equation}
Note that $A$ is negative definite (cf.~Proposition~4.1), hence full-rank and $C$ has been shown above to be positive definite, for any $x\not=0$, $x\in \mathbb{R}^d$, $x^T(-A^T C^{-1} A) x <0$. Hence, $-A^T C^{-1}A$ is a negative definite matrix and \eqref{ode-f} has the origin in $\mathbb{R}^d$ as it's unique globally asymptotically stable attractor. Thus, (C4) is also verified.
Now consider the ODE: \(\dot{w}(t) = \bar{g}(\theta,w(t)) = A\theta + b - C w(t).\)
Here
$V_2(w) = \frac{1}{2} (A\theta+b-Cw)^T (A\theta+b-Cw)$ can be easily seen to be a Lyapunov function for the above ODE with 
$\lambda(\theta) = C^{-1}(A\theta+b)$ as the unique globally asymptotically stable equilibrium.
$\lambda(\theta)$ is also a linear function of $\theta$, hence is Lipschitz continuous. This verifies (C5).

Finally, consider the ODE
\[ 
  \dot{\theta}(t) = \bar{h}(\theta(t),\lambda(\theta(t))) 
    = A\theta(t) + b -A^T C^{-1}(A\theta(t)+b) \]
\begin{equation}
    \label{ode-h}
- C C^{-1}(A\theta(t)+b)
 = -A^T C^{-1} (A\theta(t) +b).
\end{equation}
The function
$V_3(\theta) \stackrel{\triangle}{=} (A^T C^{-1}(A\theta+b))^T
(A^T C^{-1}(A\theta+b))/2$ serves as a Lyapunov function for \eqref{ode-h} with $\theta^* = -A^{-1}b$ as it's
unique globally asymptotically stable equilibrium.
This verifies (C6).

It follows from Theorem~\ref{MainResult} that both the iterates in the TDC($\lambda$)-schedule algorithm given by \eqref{TDCLambdaScehdule} remain uniformly bounded almost surely and moreover $(\theta_n,w_n) \rightarrow (\theta^*,\lambda(\theta^*)) = (-A^{-1}b, C^{-1}(A(-A^{-1}b) + b)) = (-A^{-1}b,0)$ almost surely. 
In particular, $\theta_n\rightarrow\theta^* = -A^{-1}b$ as $n\rightarrow\infty$ almost surely. 
The claim follows.
\end{proof}
\begin{theorem}
\label{TDC_app}
Under Assumptions 1-3 and 5, the TDC($\lambda$)-schedule algorithm given by \eqref{TDCLambdaScehdule} satisfies $\theta_{t}\rightarrow\theta^*=-A^{-1}b$ almost surely as $t\rightarrow\infty$.
\end{theorem}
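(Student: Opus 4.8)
The plan is to recognize the TDC($\lambda$)-schedule recursion \eqref{TDCLambdaScehdule} as a two-timescale stochastic approximation scheme driven by the Markov noise $\{X_t\}$, and to invoke the general two-timescale convergence result of Theorem~\ref{MainResult}. The first step is to rewrite the iterates in the canonical form \eqref{TDCLambdaSchedule1}. Using $A(X_t) = z_t(\gamma\phi_{t+1}^T - \phi_t^T)$, $b(X_t) = z_t R_{t+1}$ and $C(X_t) = \phi_t\phi_t^T$, one observes that $\delta_t z_t = A(X_t)\theta_t + b(X_t)$, that $(\gamma\phi_{t+1}-\phi_t)z_t^T = A(X_t)^T$, and that $\phi_t\phi_t^T = C(X_t)$, so the slow ($\alpha_t$) and fast ($\beta_t$) increments collapse to $h(\theta,w,X) = A(X)\theta + b(X) - A(X)^T w - C(X)w$ and $g(\theta,w,X) = A(X)\theta + b(X) - C(X)w$, with stationary averages $\bar{h}(\theta,w) = A\theta + b - A^Tw - Cw$ and $\bar{g}(\theta,w) = A\theta + b - Cw$.

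The bulk of the argument is the verification of conditions (C1)--(C6). Conditions (C1) and (C2) are immediate: since $\mathcal{S}$ is finite, $\sup_t\|\phi_t\| < \infty$ forces $z_t$, $A(X)$, $C(X)$ to be uniformly bounded, which makes $h$ and $g$ Lipschitz in $(\theta,w)$, while the step-size requirements are exactly Assumptions~1 and~5. For the faster-timescale scaling limit (C3) I would compute $\bar{g}_\infty(\theta,w) = A\theta - Cw$ and note that $C = \Phi^T D\Phi$ is positive definite (Assumption~3 plus positivity of the diagonal of $D$), so $V_1(w) = \tfrac12\|A\theta - Cw\|^2$ is a Lyapunov function giving the globally asymptotically stable equilibrium $\lambda_\infty(\theta) = C^{-1}A\theta$, which is linear, hence Lipschitz, and satisfies $\lambda_\infty(0)=0$. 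For (C4) I would substitute $w = \lambda_\infty(\theta)$ to obtain $\bar{h}_\infty(\theta) = -A^T C^{-1}A\,\theta$; since $A$ is negative definite by Proposition~\ref{propND} (hence full rank) and $C^{-1}$ is positive definite, $-A^T C^{-1}A$ is negative definite, so the origin is the globally asymptotically stable equilibrium of $\dot\theta = \bar{h}_\infty(\theta)$. The remaining conditions (C5) and (C6) are the non-scaled analogues: the faster ODE $\dot w = A\theta + b - Cw$ has the Lipschitz equilibrium $\lambda(\theta) = C^{-1}(A\theta + b)$ (Lyapunov function $V_2(w) = \tfrac12\|A\theta + b - Cw\|^2$), and substituting this into $\bar{h}$ yields the slower ODE $\dot\theta = -A^T C^{-1}(A\theta + b)$, whose unique globally asymptotically stable equilibrium is $\theta^* = -A^{-1}b$ (Lyapunov function $V_3(\theta) = \tfrac12\|A^T C^{-1}(A\theta + b)\|^2$).

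With (C1)--(C6) in hand, Theorem~\ref{MainResult} delivers both almost-sure boundedness of $(\theta_n, w_n)$ and convergence to $(\theta^*, \lambda(\theta^*))$. The final bookkeeping step is to evaluate $\lambda(\theta^*) = C^{-1}(A(-A^{-1}b) + b) = C^{-1}(-b + b) = 0$, so that $\theta_t \to \theta^* = -A^{-1}b$ almost surely, as claimed. I expect the main obstacle to lie not in the Lyapunov computations, which are routine once negative definiteness of $A$ and positive definiteness of $C$ are available, but in justifying that the scaling-limit conditions (C3)--(C4) genuinely certify stability of the \emph{coupled} iterates under Markov noise. Specifically, the stability argument underlying Theorem~\ref{MainResult} must be applied in the Markov-noise, two-timescale setting of \citet{chandru-SB} and \citet{prasenjit-SB}, which requires checking that the ergodic occupation measures of $\{X_t\}$ reduce to the single stationary measure $\nu$ (guaranteed by Assumptions~2--3) and that this measure is independent of $(\theta,w)$ -- the property that lets one replace $h,g$ by their averages $\bar h,\bar g$ and makes the delicate timescale separation go through.
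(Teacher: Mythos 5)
Your proposal is correct and follows essentially the same route as the paper's own argument: rewriting \eqref{TDCLambdaScehdule} in the two-timescale form \eqref{TDCLambdaSchedule1} with $h$ and $g$ built from $A(X_t)$, $b(X_t)$, $C(X_t)$, verifying (C1)--(C6) via the same Lyapunov functions and the negative definiteness of $A$ and positive definiteness of $C = \Phi^T D \Phi$, and then invoking Theorem~\ref{MainResult} together with the computation $\lambda(\theta^*) = 0$. The closing remarks about where the real difficulty lies (stability of the coupled iterates under Markov noise via \citet{chandru-SB} and \citet{prasenjit-SB}) match the discussion the paper defers to Remark~2 of Appendix~A6.
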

Some remarks on the results and proofs are given in Appendix A4 that also contains empirical results obtained using the aforementioned algorithms. 

\section{Related Work and Conclusion}
Recent work by \cite{Online_off_policy}, \cite{SuttonBarto_book} and an earlier work by \cite{Dann} provide a comprehensive survey of TD based algorithms. However, for the sake of completeness we discuss some of the relevant works.
TD($\lambda$) with variable $\lambda$ presented in Chapter 12 of \cite{SuttonBarto_book} and \cite{PTDLambda} come close to our algorithms. 
However, the parameter $\lambda$ in those algorithms is a function of state. 
Moreover, such a $\lambda$-function does not give arbitrary weights to different $n$-step returns.
In fact, to the best of our knowledge, no other variant of TD has looked into letting the user assign weights to different $n$-step returns.
However, our $\lambda$-schedule procedure allows this by choosing appropriate $\lambda$ schedules.
State-dependent $\lambda$ can be derived as a special case of our $\lambda$-schedule procedure by letting $\lambda_{1}=\lambda(s_{t})$, $\lambda_{2}=\lambda(s_{t+1})$, etc. 

The convergence proofs presented in our paper differ from the proofs presented earlier in the TD($\lambda$)-literature and require much less verification since they
are based on the ODE method. 
Our two-timescale proof is novel in that such a proof under the Markov noise setting has not been presented before. 
Providing the proof of two-time scale iterates under Markov noise as here has been mentioned in \cite{Maei_PhD} as future work. 
We also mention that our proofs are presented under fairly general conditions and could be generalized further for a more general state-valued process.
These proofs also work for the case where the underlying Markov process does not possess a unique stationary distribution that can in turn also depend on the underlying parameters. See remarks in Appendix A6 for some further discussions on the proof techniques.

Our work calls in for a comparative analysis of bias variance trade-off of all these variants of TD algorithm. Devising and comparing different $\lambda$-schedules is left for future work. 
Another possible direction would be to extend the schedule-based algorithms to the control setting, for instance, through SARSA($\lambda$) (\cite{SuttonBarto_book}) or actor-critic methods (\cite{kondatsitsiklis, actor_critic}). 
\bibliographystyle{plainnat}
\bibliography{TDWeights_References}
{\onecolumn
\aistatstitle{Appendix}
\section*{A1 \quad Coefficient of \texorpdfstring{$R_{t+1}$}{TEXT} and  \texorpdfstring{$V_{\theta}(s_{t+1})$}{TEXT}}
We had 
\begin{equation}
\begin{split}
    G_{t}^{\Lambda(\cdot)}(\theta) & = (1-\gamma)[\Lambda_{11}R_{t+1}] \\
 & + \gamma(1-\gamma)[\Lambda_{21}(R_{t+1}+V_{\theta}(s_{t+1})) + \Lambda_{22}(R_{t+1}+R_{t+2})] \\
 & + \gamma^2(1-\gamma)[\Lambda_{31}(R_{t+1} + V_{\theta}(s_{t+1})) + \Lambda_{32}(R_{t+1}+R_{t+2}+V_{\theta}(s_{t+2})) \\ & \quad \quad \quad \quad \quad +\Lambda_{33}(R_{t+1}+R_{t+2}+R_{t+3})] +\cdots \\
 \\ 
\end{split}
\end{equation}
Adding and subtracting $V_{\theta}(s_{t+1})$ to all the terms starting from $R_{t+1}+R_{t+2}$ in the above equation gives
\begin{equation}
\label{lambda_return_modified}
\begin{split}
    G_{t}^{\Lambda(\cdot)}(\theta) & = (1-\gamma)[\Lambda_{11}R_{t+1}] \\
 & + \gamma(1-\gamma)[\Lambda_{21}(R_{t+1}+V_{\theta}(s_{t+1})) + \Lambda_{22}(R_{t+1}+R_{t+2} + V_{\theta}(s_{t+1}) - V_{\theta}(s_{t+1}))] \\
 & + \gamma^2(1-\gamma)[\Lambda_{31}(R_{t+1} + V_{\theta}(s_{t+1})) + \Lambda_{32}(R_{t+1}+R_{t+2}+V_{\theta}(s_{t+2}) + V_{\theta}(s_{t+1}) - V_{\theta}(s_{t+1})) \\ & \quad \quad \quad \quad \quad +\Lambda_{33}(R_{t+1}+R_{t+2}+R_{t+3} + V_{\theta}(s_{t+1}) - V_{\theta}(s_{t+1}))] +\cdots \\
 \\ 
\end{split}
\end{equation}
The coefficient of $R_{t+1}$ is
\[= (1-\gamma)\Lambda_{11} + \gamma(1-\gamma)(\Lambda_{21} + \Lambda_{22}) + \gamma^2(1-\gamma)(\Lambda_{31} + \Lambda_{32} + \Lambda_{33}) + \cdots\]
\begin{equation*}
    \begin{split}
        & = (1-\gamma) \left[1 + \gamma + \gamma^2 + \cdots\right]\\
        & = (1-\gamma) \frac{1}{1-\gamma}\\
        & = 1.
    \end{split}
\end{equation*}

Similarly, the coefficient of \(V_{\theta}(s_{t+1})\) is
\begin{equation*}
    \begin{split}
        & = \gamma(1-\gamma)(\Lambda_{21} + \Lambda_{22}) + \gamma^{2}(1-\gamma)(\Lambda_{31} + \Lambda_{32} + \Lambda_{33}) + \cdots\\
        & = \gamma(1-\gamma) \left[1 + \gamma + \gamma^2 + \cdots\right]\\
        & = \gamma(1-\gamma) \frac{1}{1-\gamma}\\
        & = \gamma.
    \end{split}
\end{equation*}
Equation \eqref{lambda_return_modified} then reduces to
\begin{equation}
\label{lambda_return_final}
\begin{split}
    G_{t}^{\Lambda(\cdot)}(\theta) & = R_{t+1} + \gamma V_{\theta}(s_{t+1}) \\
    & + \gamma\Lambda_{22}\{ (1-\gamma)(R_{t+2}) + \gamma(1-\gamma)\Big[\frac{\Lambda_{32}}{\Lambda_{22}}(R_{t+2}+V_{\theta}(s_{t+2})) + \frac{\Lambda_{33}}{\Lambda_{22}}(R_{t+2}+R_{t+3})\Big] \\
    & + \ldots - V_{\theta}(s_{t+1})  \}. \\
\end{split}
\end{equation}
\section*{A2 \quad Recursive equation for the \texorpdfstring{$\lambda$}{TEXT} 
     --schedule return}
The weight matrix was set to 
\begin{center}
     
    \[\Lambda = \begin{bmatrix}
        1 & 0 & 0 & 0 & 0 &0\ldots\\
        (1-\lambda_{1}) & \lambda_{1} & 0 & 0 & 0 &0\ldots \\
        (1-\lambda_{1}) & \lambda_{1}(1-\lambda_{2}) & \lambda_{1}\lambda_{2} & 0 & 0 &0\ldots \\
        (1-\lambda_{1}) & \lambda_{1}(1-\lambda_{2}) & \lambda_{1}\lambda_{2}(1-\lambda_{3}) & \lambda_{1}\lambda_{2}\lambda_{3} & 0 &0\ldots \\
        
        \vdots & \vdots & \vdots& \vdots &\ddots &\vdots
    \end{bmatrix}.\]
 \end{center}
Then \eqref{lambda_return_final} becomes 
\begin{equation*}
\begin{split}
    G_{t}^{[\lambda_1:]}(\theta) - V_{\theta}(s_t) & = R_{t+1} + \gamma V_{\theta}(s_{t+1}) - V_{\theta}(s_{t}) \\
    & + \gamma\lambda_1\{(1-\gamma)(R_{t+2}) + \gamma(1-\gamma)[(1-\lambda_{2})(R_{t+2}+V_{\theta}(s_{t+2})) + \lambda_{2}(R_{t+2}+R_{t+3})] \\
    & + \cdots - V_{\theta}(s_{t+1})\}. 
    \end{split}
\end{equation*}
We consider $\lambda$-schedules where $\exists L > 0$ such that $\lambda_j = 0$ for all $j > L$ and denote by $G_{t}^{[\lambda_{j}:\lambda_{L}]}$ the $\lambda$-schedule return at time $t$ following the schedule from $\lambda_{j}$ to $\lambda_{L}$. Therefore \eqref{lambda_return_final} becomes
\begin{equation*}
    \begin{split}
        G_{t}^{[\lambda_1:\lambda_L]}(\theta) & = R_{t+1} + \gamma V_{\theta}(s_{t+1}) \\
        & + \gamma\lambda_1\{(1-\gamma)(R_{t+2}) \\
        & \qquad + \gamma(1-\gamma)[(1-\lambda_{2})(R_{t+2}+V_{\theta}(s_{t+2})) + \lambda_{2}(R_{t+2}+R_{t+3})] \\
        & \qquad + \gamma^2(1-\gamma)[(1-\lambda_{2})(R_{t+2} + V_{\theta}(s_{t+2})) + \lambda_{2}(1-\lambda_{3})(R_{t+2} + R_{t+3} + V_{\theta}(s_{t+3})) \\
        & \qquad\qquad\qquad\qquad +\lambda_{2}\lambda_{3}(R_{t+2} + R_{t+3} + R_{t+3})] + \cdots\\
        & \qquad+\gamma^{L-1}(1-\gamma)[(1-\lambda_{2})(R_{t+2} + V_{\theta}(s_{t+2})) + \lambda_{2}(1-\lambda_{3})(R_{t+2} + R_{t+3} + V_{\theta}(s_{t+3})) \\&\qquad\qquad\qquad\qquad + \cdots + \lambda_{2}\lambda_{3}\ldots\lambda_{L}(R_{t+2} + R_{t+3} + \cdots + R_{t+L})] \\ &\qquad - V_{\theta}(s_{t+1})\}.
    \end{split}
\end{equation*}
\begin{equation*}
    \begin{split}
          G_{t}^{[\lambda_1:\lambda_L]}(\theta)& = R_{t+1} + \gamma(1-\lambda_{1})V_{\theta}(s_{t+1})  \\ 
           & + \gamma\lambda_1\{(1-\gamma)(R_{t+2}) \\
        & \qquad + \gamma(1-\gamma)[(1-\lambda_{2})(R_{t+2}+V_{\theta}(s_{t+2})) + \lambda_{2}(R_{t+2}+R_{t+3})] \\
        & \qquad + \gamma^2(1-\gamma)[(1-\lambda_{2})(R_{t+2} + V_{\theta}(s_{t+2})) + \lambda_{2}(1-\lambda_{3})(R_{t+2} + R_{t+3} + V_{\theta}(s_{t+3})) \\
        & \qquad\qquad\qquad\qquad +\lambda_{2}\lambda_{3}(R_{t+2} + R_{t+3} + R_{t+3})] + \cdots\\
        & \qquad+\gamma^{L-1}(1-\gamma)[(1-\lambda_{2})(R_{t+2} + V_{\theta}(s_{t+2})) + \lambda_{2}(1-\lambda_{3})(R_{t+2} + R_{t+3} + V_{\theta}(s_{t+3})) \\&\qquad\qquad\qquad\qquad + \cdots + \lambda_{2}\lambda_{3}\ldots\lambda_{L}(R_{t+2} + R_{t+3} + \cdots + R_{t+L})]\}\\
          & = R_{t+1} + \gamma(1-\lambda_{1})V_{\theta}(s_{t+1}) + \gamma\lambda_{1}G_{t+1}^{[\lambda_{2}:\lambda_{L}]}(\theta)\\
          & = R_{t+1} + \gamma \left[(1-\lambda_1)V_{\theta}(s_{t+1})+\lambda_1 G_{t+1}^{[\lambda_{2}:\lambda_{L}]}(\theta)\right].
    \end{split}
\end{equation*}
\section*{A3 \quad Gradient Based Algorithms}
\subsection*{A3.1 \quad Proof of Lemma 3.1}
\begin{customlemma}{3.1}\label{lemma3.1}
The objective function $J(\theta) = ||V_{\theta} - \Pi T^{\pi,[\lambda_1:\lambda_L]}V_{\theta}||_{D}^{2}$ can be equivalently written as
    $J(\theta) = \left(P_{\mu}^{\pi}\delta_{t}^{[\lambda_1:\lambda_L]}(\theta)\phi_{t}\right)^T \mathbb{E}_{\mu}\left[\phi_{t}\phi_{t}^T\right]^{-1}\left(P_{\mu}^{\pi}\delta_{t}^{[\lambda_1:\lambda_L]}(\theta)\phi_{t}\right).$\\
\end{customlemma}
\begin{proof}
The proof follows along the lines of \citet{Maei_PhD} and is reproduced here for completeness. We have
\begin{equation*}
G_{t}^{[\lambda_1:\lambda_L]}(\theta) = R_{t+1} + \gamma \left[(1-\lambda_1)V_{\theta}(s_{t+1})+\lambda_1 G_{t+1}^{[\lambda_{2}:\lambda_{L}]}(\theta)\right].
\end{equation*}
Next, we have the following:
\[\mathbb{E}_{\pi}\left[\delta_{t}^{[\lambda_{1}:\lambda_{L}]}(\theta)|s_{t} = s\right] = (T^{\pi[\lambda_{1}:\lambda_{L}]}V_{\theta}-V_{\theta})(s),\]
\[\mathbb{E}_{\mu}\left[\phi_{t}\phi_{t}^T\right] = \mathbb{E}_{\mu}\left[\sum_{s}d^\mu(s)\phi_{s}\phi_{s}^T\right] = \Phi^TD_{\mu}\Phi,\]
\begin{equation*}
    \begin{split}
        P_{\mu}^{\pi}\delta_{t}^{[\lambda_1:\lambda_L]}(\theta)\phi_{t} &= \sum_{s}d^{\mu}(s)\mathbb{E}_{\pi}\left[\delta_{t}^{[\lambda_{1}:\lambda_{L}]}(\theta)|s_{t} = s\right]\\
        &= \sum_{s}d^{\mu}(s)\left[(T^{\pi[\lambda_{1}:\lambda_{L}]}V_{\theta}-V_{\theta})(s)\right]\phi_{s}\\
        &= \Phi^TD_{\mu}(T^{\pi[\lambda_{1}:\lambda_{L}]}V_{\theta}-V_{\theta}).
    \end{split}
\end{equation*}
Now, the objective function can be written as (see \citet{Maei_PhD})
\begin{equation*}
    \begin{split}
        J(\theta) &= ||V_{\theta} - \Pi T^{\pi,[\lambda_1:\lambda_L]}V_{\theta}||_{D_{\mu}}^{2}\\
        &= \left(\Phi^TD_{\mu}(T^{\pi[\lambda_{1}:\lambda_{L}]}V_{\theta}-V_{\theta})\right)^T(\Phi_{t}^TD_{\mu}\Phi_{t})^{-1} \left(\Phi^TD_{\mu}(T^{\pi[\lambda_{1}:\lambda_{L}]}V_{\theta}-V_{\theta})\right)\\
        &=\left(P_{\mu}^{\pi}\delta_{t}^{[\lambda_1:\lambda_L]}(\theta)\phi_{t}\right)^T \mathbb{E}_{\mu}\left[\Phi_{t}^T\Phi_{t}\right]\left(P_{\mu}^{\pi}\delta_{t}^{[\lambda_1:\lambda_L]}(\theta)\phi_{t}\right).
    \end{split}
\end{equation*}
The claim follows.
\end{proof}
\subsection*{A3.2 \quad Proof of Theorem 3.2}
\begin{customthm}{3.2}\label{thm3.2}
$P_{\mu}^{\pi}\delta_{t}^{[\lambda_1:\lambda_L]}(\theta)\phi_{t} = \mathbb{E}_{\mu} \left[\delta_{t}^{[\lambda_{1}:\lambda_{L}],\rho}(\theta)\phi_{t}\right].$
\end{customthm}

\begin{proof}
We begin by expressing the RHS as:
\begin{equation*}
    \begin{split}
        \mathbb{E}_{\mu}\left[\delta_{t}^{[\lambda_{1}:\lambda_{L}],\rho}(\theta)\phi_{t}\right] & = \mathbb{E}_{s\sim d^{\mu}}\left[\mathbb{E}\left[\delta_{t}^{[\lambda_1:\lambda_L],\rho}(\theta)\phi_{t}|S_t = s\right]\right] \\
         &= \mathbb{E}_{s \sim d^{\mu}} \left[\mathbb{E}_{\mu}[\rho_{t:t}\delta_t(\theta)\phi_t|S_t=s\right] + \mathbb{E}_{\mu}\left[\rho_{t:t+1}\gamma\lambda_1\delta_{t+1}(\theta)\phi_{t}|S_t =s\right] + \cdots \\ 
         & + \mathbb{E}_{\mu}\left[\rho_{t:t+L}\gamma^L\lambda_1\ldots\lambda_L\delta_{t+L}(\theta)\phi_t|S_t=s\right]] \\
         &= \sum_{s} d^{\mu}(s) \mathbb{E}_{\pi}\left[(\delta_t(\theta) + \gamma\lambda_1\delta_{t+1}(\theta) + \gamma^2\lambda_1\lambda_2\delta_{t+2}(\theta) + \ldots + \gamma^L\lambda_1\ldots\lambda_L\delta_{t+L}(\theta))\phi_{t}|S_{t} = s,\pi\right] \\
        &= \sum_{s} d^{\mu}(s) \mathbb{E}_{\pi}\left[\delta_{t}^{[\lambda_1:\lambda_{L}]}(\theta)\phi_t|S_t = s,\pi\right]\\
        &= P_{\mu}^{\pi}\delta_{t}^{[\lambda_1:\lambda_L]}(\theta)\phi_t.
    \end{split}
\end{equation*}
The claim follows.
\end{proof}
\subsection*{A3.3 \quad Proof of Lemma 3.3}
\begin{customthm}{3.3}\label{thm3.3}
$\mathbb{E}_{\mu}\left[\rho_{t}\gamma\lambda_{1}\delta_{t+1}^{[\lambda_{2}:\lambda_{L}],\rho}(\theta)\phi_{t}\right] = \mathbb{E}_{\mu}\left[\rho_{t-1}\gamma\lambda_{1}\delta_{t}^{[\lambda_{2}:\lambda_{L}],\rho}(\theta)\phi_{t-1}\right]$.
\end{customthm}
    
\begin{proof}
The LHS, $\mathbb{E}_{\mu}\left[\rho_{t}\gamma\lambda_{1}\delta_{t+1}^{[\lambda_{2}:\lambda_{L}],\rho}(\theta)\phi_{t}\right]$
\[=\mathbb{E}_{\mu}\left[\rho_{t}\gamma\lambda_{1}\left(\rho_{t+1}\delta_{t+1}\phi_{t} + \gamma\lambda_{2}\rho_{t+1}\rho_{t+2}\delta_{t+2}\phi_{t}+\ldots+\gamma^{L-1}\lambda_{2}\ldots\lambda_{L}\rho_{t+1}\ldots\rho_{t+L}\delta_{t+L}\phi_{t}\right)\right]] \]
\[= \gamma\lambda_{1}\left(\mathbb{E}_{\mu}[\rho_{t:t+1}\delta_{t+1}\phi_{t}] + \gamma\lambda_{2}\mathbb{E}_{\mu}[\rho_{t:t+2}\delta_{t+2}\phi_{t}] + \ldots + \gamma^{L-1}\lambda_{2}\ldots\lambda_{L}\mathbb{E}_{\mu}[\rho_{t:t+L}\delta_{t+L}\phi_{t}]\right)\]

where $\rho_{i:j} \triangleq \rho_{i}\rho_{i+1}\ldots\rho_{j}$. Under stationarity, $\mathbb{E}_{\mu}\left[\rho_{t:t+j}\delta_{t+j}\phi_{t}\right] = \mathbb{E}_{\mu}\left[\rho_{t-1:t+j-1}\delta_{t+j-1}\phi_{t-1}\right]$, $\forall j\geq0$. 
Therefore, the LHS reduces to
\begin{equation*}
    \begin{split}
        \gamma\lambda_{1}(& \mathbb{E}_{\mu}[\rho_{t-1:t}\delta_{t}\phi_{t-1}] + \gamma\lambda_{2}\mathbb{E}_{\mu}[\rho_{t-1:t+1}\delta_{t+1}\phi_{t-1}] + \cdots \\
        & + \gamma^{L-1}\lambda_{2}\ldots\lambda_{L}\mathbb{E}_{\mu}[\rho_{t-1:t+L-1}\delta_{t+L-1}\phi_{t-1}])
    \end{split}
\end{equation*}
\begin{equation*}
\begin{split}
    &=\mathbb{E}_{\mu}\left[\rho_{t-1}\gamma\lambda_{1}\left(\rho_{t:t}\delta_{t} + \gamma\lambda_{2}\rho_{t:t+1} + \ldots \gamma^{L-1}\lambda_{2}\ldots\lambda_{L}\delta_{t:t+L-1}\delta_{t+L-1}\right)\phi_{t-1}\right]\\
    &= \mathbb{E}_{\mu}\left[\rho_{t-1}\gamma\lambda_{1}\delta_{t}^{[\lambda_{2}:\lambda_{L}],\rho}(\theta)\phi_{t-1}\right].
\end{split}
\end{equation*}
The claim follows.
\end{proof}

\subsection*{A3.4 \quad Proof of Theorem 3.4}
\begin{customthm}{3.4}\label{thm3.4}
Define the eligibility vector $z_{t} = \sum_{i = t-L}^{t}\left[\rho_{t}\left(\Pi_{j=1}^{t-i}\rho_{t-j}\gamma\lambda_j\right)\phi_i\right].$ 
Then,
$\mathbb{E}_{\mu}\left[\delta_{t}^{[\lambda_{1}:\lambda_{L}],\rho}(\theta)\phi_{t}\right] = \mathbb{E}_{\mu}\left[\delta_{t}(\theta)z_{t}\right]$.
\end{customthm}

\begin{proof}
We have
\begin{equation*}
\begin{split}
G_{t}^{[\lambda_1:\lambda_L],\rho}(\theta) & = \rho_t\left(R_{t+1} + \gamma \left[(1-\lambda_1)V_{\theta}(s_{t+1})+\lambda_1 G_{t+1}^{[\lambda_{2}:\lambda_{L}],\rho}(\theta)\right]\right)
 \\
 &= \rho_{t}\left(R_{t+1} + \gamma\theta^T\phi_{t+1} - \theta^T\phi_t + \theta^T\phi_t\right) - \rho_t\gamma\lambda_1(\theta^T\phi_{t+1}) + \rho_{t}\gamma\lambda_1G_{t+1}^{[\lambda_2:\lambda_L],\rho}(\theta)\\ 
 & = \rho_t\delta_{t}(\theta) + \rho_t\theta^T\phi_t + \rho_{t}\gamma\lambda_1\left(G_{t+1}^{[\lambda_2:\lambda_L],\rho}(\theta) - \theta^T\phi_{t+1}\right)\\
&= \rho_t\delta_{t}(\theta) + \rho_t\theta^T\phi_t + \rho_{t}\gamma\lambda_1\delta_{t+1}^{[\lambda_2:\lambda_L],\rho}(\theta).
\end{split}
\end{equation*}

From the earlier definition we have, 
\begin{equation*}
\begin{split}
\delta_{t}^{[\lambda_1:\lambda_L],\rho}(\theta) & = G_{t}^{[\lambda_1:\lambda_L],\rho}(\theta) - \theta^T\phi_{t}\\
&= \rho_t\delta_{t}(\theta) + (\rho_t-1)\theta^T\phi_t + \rho_{t}\gamma\lambda_1\delta_{t+1}^{[\lambda_2:\lambda_L],\rho}(\theta).
\end{split}
\end{equation*}

Taking expectation on both sides, using Lemma 3.3 and using the identity $\mathbb{E}[(\rho_{t}-1)\theta^{T}\phi_{t}\phi_{k}] = 0$  $\forall k\leq t$,
\begin{equation*}
\begin{split}
\mathbb{E}_{\mu}\left[\delta_{t}^{[\lambda_1:\lambda_L],\rho}(\theta)\phi_t\right] 
&= \mathbb{E}_{\mu}\left[\rho_t\delta_t(\theta)\phi_t\right] + \mathbb{E}_{\mu}\left[\rho_t\gamma\lambda_1\delta_{t+1}^{[\lambda_2:\lambda_L],\rho}(\theta)\phi_t\right]\\
&= \mathbb{E}_{\mu}\left[\rho_t\delta_t(\theta)\phi_t\right] + \mathbb{E}_{\mu}\left[\rho_{t-1}\gamma\lambda_1\delta_{t}^{[\lambda_2:\lambda_L],\rho}(\theta)\phi_{t-1}\right]\\
&= \mathbb{E}\left[\rho_t\delta_t(\theta)\phi_t + \rho_{t-1}\gamma\lambda_1\left((\rho_t\delta_t(\theta)+\rho_t\gamma\lambda_2\delta_{t+1}^{[\lambda_3:\lambda_L],\rho}(\theta))\phi_{t-1}\right)\right]\\
&= \mathbb{E}\left[\rho_t\delta_t(\theta)\phi_t + \rho_{t-1}\rho_{t}\gamma\lambda_1\delta_t(\theta)\phi_{t-1} + \rho_{t}\rho_{t-1}\gamma^2\lambda_1\lambda_2 \delta_{t+1}^{[\lambda_3:\lambda_L],\rho}(\theta))\phi_{t-1} \right]\\
&= \mathbb{E}_{\mu}[\delta_{t}(\theta)(\rho_{t}\phi_{t} + \rho_{t}\rho_{t-1}\gamma\lambda_{1}\phi_{t-1} + \rho_{t}\rho_{t-1}\rho_{t-2}\gamma^2\lambda_{1}\lambda_{2}\phi_{t-2} + \cdots \\ & \qquad\quad + \rho_{t}\rho_{t-1}\ldots\rho_{t-L}\gamma^{L}\lambda_{1}\lambda_{2}\ldots\lambda_{L}\phi_{t-L})]\\
&= \mathbb{E}_{\mu}\left[\delta_{t}(\theta)z_{t}\right]].
\end{split}
\end{equation*}
The claim follows.
\end{proof}

\section*{A4 \quad Convergence Analysis}
\subsection*{A4.1 \quad TD\texorpdfstring{($\lambda$)}{TEXT}-schedule}
\begin{customthm}{A4.1}
 \label{lemma7_tsitsiklis}
 $\mathbb{E}_{0}[\phi(s_0)\phi(s_m)^T] = \Phi^TDP^m\Phi$
\end{customthm}
 \begin{proof}
 Refer to Lemma 7 in \citet{TsitsiklisVanRoy}.
\end{proof}
\textbf{Proposition 4.1.}
    \textit{The matrix $A$ is negative definite.}
\begin{proof}
Let $||\cdot||_D$ be the weighted quadratic norm defined by $||V||_D = \sqrt{V^TDV} = \sqrt{\sum_{s=1}^{n}d^{\pi}(s)V(s)^{2}}$.  
Since the expectation is with respect to the steady state distribution of $\{X_{t}\}$, the matrix $A = \mathbb{E}_{0}[A(X_{t})]$ can be equivalently written as $A = \mathbb{E}_{0}[\gamma z_{0}\phi_{1}^{T} - z_{0}\phi_{0}^{T}]$.
Using Lemma \ref{lemma7_tsitsiklis}, the second term in $A$ can be written as
\begin{equation}
\begin{split}
 \mathbb{E}_{0}[z_{0}\phi(s_{0})^T]
  & = \mathbb{E}_{0}\left[\sum_{k=-L}^{0}\left(\prod_{j=1}^{-k}\gamma\lambda_j\right)\phi(s_k)\phi(s_0)^T\right]
 = \mathbb{E}_{0}\left[\sum_{k=0}^{L}\left(\prod_{j=1}^{k}\gamma\lambda_j\right)\phi(s_{-k})\phi(s_0)^T\right]\\
 &= \Phi^T D\left[\sum_{k=0}^{L}\left(\prod_{j=1}^{k}\gamma\lambda_j\right) P^{k}\right]\Phi.
\end{split}
\end{equation}
Similarly, the first term in $A$ can be written as:
\begin{eqnarray*}
\mathbb{E}_{0}[z_{0}\phi(s_{1})^T]
  &=& \mathbb{E}_{0}\left[\sum_{k=-L}^{0}\left(\prod_{j=1}^{-k}\gamma\lambda_j\right)\phi(s_k)\phi(s_1)^T\right]
 = \mathbb{E}_{0}\left[\sum_{k=0}^{L}\left(\prod_{j=1}^{k}\gamma\lambda_j\right)\phi(s_{-k})\phi(s_1)^T\right]\\
 &=& \Phi^T D\left[\sum_{k=0}^{L}\left(\prod_{j=1}^{k}\gamma\lambda_j\right) P^{k+1}\right]\Phi.
\end{eqnarray*}
Therefore, 

\[ A = \mathbb{E}_0 [z_0 (\gamma\phi_1 - \phi_0)^T] = \Phi^T D\left[\sum_{k=0}^{L}\left(\prod_{j=1}^{k}\gamma\lambda_j\right) (\gamma P^{k+1} - P^{k})\right]\Phi. \] 

Now, the matrix $\sum_{k=0}^{L}\left(\prod_{j=1}^{k}\gamma\lambda_j\right) (\gamma P^{k+1} - P^{k})$
\begin{equation*}
\begin{split}
& = \gamma P - I +\gamma^2\lambda_{1}P^2 - \gamma\lambda_{1}P + \gamma^3\lambda_1\lambda_2P^3 - \gamma^2 \lambda_1\lambda_2P^2 + \ldots \\ &+\gamma^{L}\lambda_1\lambda_2\ldots\lambda_{L-1}P^{L} - \gamma^{L-1}\lambda_1\lambda_2\ldots\lambda_{L-1}P^{L-1}+ \gamma^{L+1}\lambda_{1}\lambda_{2}\ldots\lambda_{L}P^{L+1} - \gamma^{L}\lambda_{1}\lambda_{2}\ldots\lambda_{L}P^{L}\\
& = \gamma(1-\lambda_{1})P + \gamma^2\lambda_1(1-\lambda_2)P^2 + \ldots + \gamma^L\lambda_{1}\lambda_{2}\ldots\lambda_{L-1}(1-\lambda_{L})P^{L} + \gamma^{L+1}\lambda_{1}\lambda_{2}\ldots\lambda_{L}P^{L+1}-I.
\end{split}
\end{equation*}
We define \[M = \gamma(1-\lambda_{1})P + \gamma^2\lambda_1(1-\lambda_2)P^2 + \ldots + \gamma^L\lambda_{1}\lambda_{2}\ldots\lambda_{L-1}(1-\lambda_{L})P^{L} + \gamma^{L+1}\lambda_{1}\lambda_{2}\ldots\lambda_{L}P^{L+1}.\] 
and thus $A = \Phi^TD(M-I)\Phi$. Now,
\begin{equation*}
\begin{split}
& ||MV||_{D} \leq [\gamma(1-\lambda_{1}) + \gamma^2\lambda_{1}(1-\lambda_2)+\ldots+\gamma^{L}\lambda_{1}\ldots\lambda_{L-1}(1-\lambda_{L}) +  \gamma^{L+1}\lambda_{1}\lambda_{2}\ldots\lambda_{L}]||V||_{D}\\
&\quad \quad \quad \leq [\gamma(1-\lambda_{1}) + \gamma\lambda_{1}(1-\lambda_2)+\ldots+\gamma\lambda_{1}\ldots\lambda_{L-1}(1-\lambda_{L}) +  \gamma\lambda_{1}\lambda_{2}\ldots\lambda_{L}]||V||_{D}\\
& \quad \quad \quad = \gamma[1-\lambda_1+\lambda_1-\lambda_1\lambda_2+\ldots - (\lambda_{1}\ldots\lambda_{L}) + (\lambda_{1}\ldots\lambda_{L})]||V||_{D}\\
&\quad \quad \quad = \gamma||V||_{D}.
\end{split}
\end{equation*}
The first inequality above follows from the facts that $||PV||_D \leq ||V||_D$, $||P^{2} V||_D = ||P(PV)||_D \leq ||PV||_D
\leq ||V||_D$ etc. 
The second inequality follows from the fact that $\gamma^i \leq \gamma$ for all $i \in \mathbb{N}$ when $\gamma \leq 1$. 
Thereafter following similar arguments as in Lemma 6.6 in \citet{NDP_book}, $A$ can be shown to be negative definite.
\end{proof}


\subsection*{A4.2 \quad Convergence of GTD\texorpdfstring{($\lambda$)}{TEXT}-Schedule}
Recall that the iterates for GTD($\lambda$)-schedule are given by:
\begin{equation}
\label{GTDLambdaSchedule_app}
\begin{split}
    \theta_{t+1} = \theta_t + \alpha_t\left((\phi_t-\gamma\phi_{t+1})z_t^T w_t\right),\\
    w_{t+1} = w_t + \beta_t\left(\delta_{t}(\theta)z_{t}-\phi_t\phi_t^{T}w_{t} \right).
    \end{split}
\end{equation}
\begin{customthm}{4.4}\label{thm4.4}
Under Assumptions 1-4, $\{\theta_{t}\}$ in the GTD($\lambda$)-Schedule iterate given in equation \eqref{GTDLambdaSchedule_app} converges almost surely to $-A^{-1}b$.
\end{customthm}
\begin{proof}
We rewrite \eqref{GTDLambdaSchedule_app} as follows:
\begin{equation}
\label{GTDLambdaChangedIterate}
    \xi_{t+1} = \xi_{t} + \alpha_{t}\sqrt{\eta}(G_{t}\xi_{t} + g_{t+1}),
\end{equation}
where, $\xi_{t}^T = (w_{t}^{T}/\sqrt{\eta},\theta_{t}^{T})$, $g_{t+1}^T = (R_{t+1}z_{t}^{T},0^T)$ and
\begin{center}
    \[G_{t} = \begin{bmatrix}
        -\sqrt{\eta}\phi_{t}\phi_{t}^T & z_{t}(\gamma\phi_{t+1} - \phi_{t})^T\\
        -(\gamma\phi_{t+1} - \phi_{t})z_{t}^T & 0
    \end{bmatrix}.\]
 \end{center}
Equation \eqref{GTDLambdaChangedIterate} can be rewritten as
\begin{equation}
\label{GTDL}
    \xi_{t+1} = \xi_{t} + \alpha_{t}\sqrt{\eta}(G(X_{t})\xi_{t} + g(X_{t})),
\end{equation}
where,
\[ G(X_{t})=
\begin{bmatrix}
    -\sqrt{\eta}C(X_{t})  &  A(X_{t})      \\
    -A(X_{t})^{T}  &  0      
\end{bmatrix}
\mbox{ and } g(X_{t}) =
\begin{bmatrix}
    b(X_{t})     \\
    0      
\end{bmatrix}. 
\]
The matrix 
\[ G = \mathbb{E}_{0}[G(X_{t})]=
\begin{bmatrix}
    -\sqrt{\eta}C  &  A      \\
    -A^{T}  &  0      
\end{bmatrix}
\]
can be easily shown to be negative definite (see \citet{Maei_PhD}). 
Verification of Conditions (B1)-(B5) can now be done in a similar manner as in
Theorem 4.3 and it can be shown that $\xi_{t} \rightarrow -G^{-1}g$. Using the following formula for inverse of a block matrix
\[
\begin{bmatrix}
    A_{11} &  A_{12}      \\
    A_{21}  &  0      
\end{bmatrix}^{-1}
=
\begin{bmatrix}
    0 & A_{21}^{-1}    \\
    A_{12}^{-1} & -A_{12}^{-1}A_{11}A_{21}^{-1}      
\end{bmatrix}, 
\]
we have, 
\[
\begin{bmatrix}
    w_{t}/\sqrt{\eta} \\
    \theta_{t}      
\end{bmatrix}
\rightarrow
\begin{bmatrix}
    0 & (A^T)^{-1}   \\
    -A^{-1} & A^{-1}\sqrt{\eta}CA^{-1}      
\end{bmatrix} 
\begin{bmatrix}
    b\\
    0
\end{bmatrix}.
\]
Therefore $\theta_{t} \rightarrow -A^{-1}b$.
\end{proof}

\section*{A5 \quad Experiments}

\subsection*{A5.1 \quad 100-State Random Walk}
 This is a randomly generated discrete MDP with 100 states and 5 actions in each state. The transition probabilities are uniformly generated from $[0,1]$ with a small additive constant. The rewards are also uniformly generated from $[0,1]$. The policy and the start state distribution are also generated in a similar way and the discount factor $\gamma = 0.95$. See \cite{Dann} for a more detailed description. Tabular features are used in both cases. Figure \ref{100-state RW(a)} and \ref{100-state RW(b)} plot the results on this MDP.

\begin{figure}[H]
    \centering
    \subfloat[$\alpha = 0.005$]{
    \label{100-state RW(a)}
    {\includegraphics[width=0.5\linewidth]{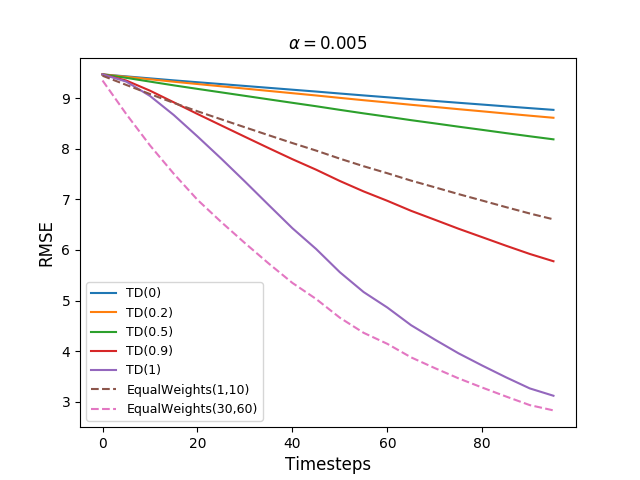}}
    }%
    \subfloat[$\alpha = 0.01$]{
    \label{100-state RW(b)}
    {\includegraphics[width=0.5\linewidth]{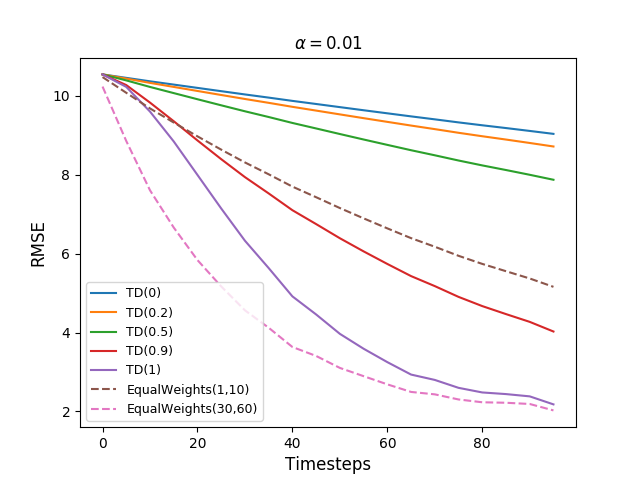}}
    }%
    \caption{RMSE across time-steps for 100-State Random Walk Example.}
\end{figure}

In both the cases it is observed that, for an episode length of 100, EqualWeights($30,60$) decreases the RMSE most rapidly. This enforces the idea that when the episode lengths are long, combining some intermediate $n$-step returns helps reduce the RMSE faster. A possible future direction to look at would be how to find the best choice of $n$-step returns that reduces the RMSE best.

\subsection*{A5.2 \quad Random Chain} 
The second example considers a simplified version of a linear random walk called \emph{Random Chain} (\citet{ajin}). 
It consists of 15 states arranged in a linear fashion with two additional absorbing states. 
At any state $s$, the agent can pick one of two actions from \{$L$, $R$\}. 
The action $L$ takes the agent to the state $s-1$ w.p. 0.9 and to the state $s+1$ w.p. 0.1, while the action $R$ takes the agent to the state $s-1$ with probability 0.1 and to the state $s+1$ w.p. 0.9.
The reward associated with all transitions is zero except all transitions to states 5 and 10, in which case the reward is 1
\begin{figure}[H]
    \centering
    {\includegraphics[width=1\linewidth]{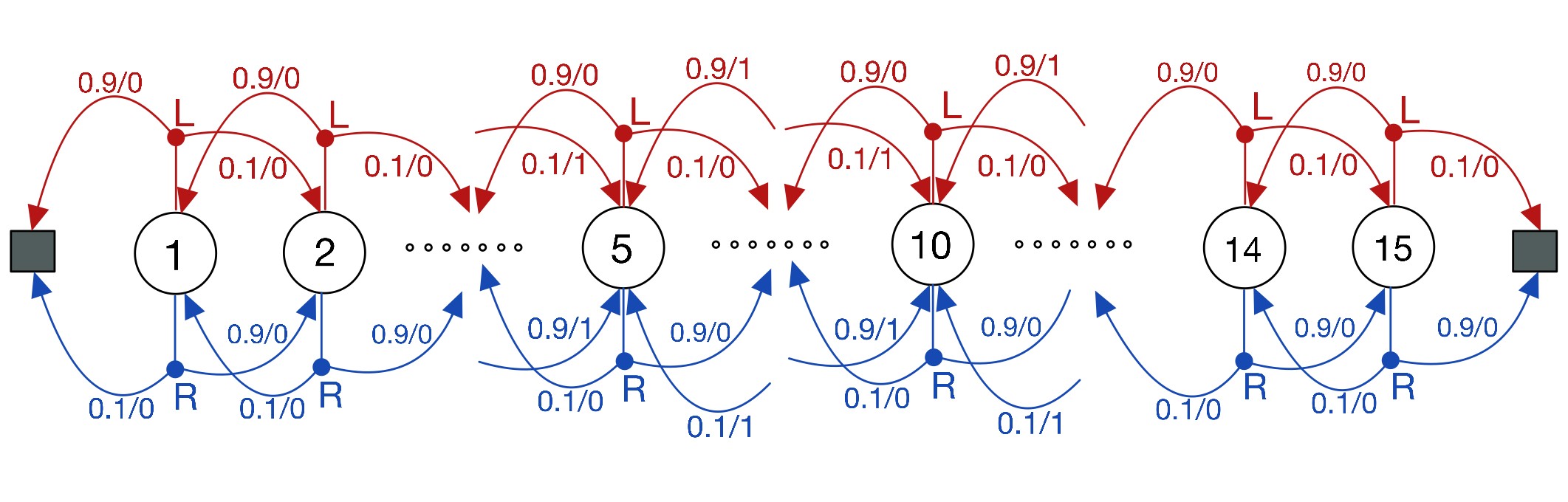}}
    \caption{The Random Chain setting from \citet{ajin}}
\end{figure}
The discount factor $\gamma = 0.9$ and the start state in each episode is chosen randomly. 
We compare our gradient algorithms GTD($\lambda$)-Schedule and TDC($\lambda$)-Schedule with the gradient algorithms GTD2 and TDC (\cite{FastGradient}).
We highlight that this example is in the off-policy setting, where the behaviour policy $\mu$ chooses both actions $L$ and $R$ w.p. 0.5 while the target policy $\pi$ chooses actions $L$ and $R$ w.p. 0.6 and 0.4 respectively. 
Figure \ref{Random_Chain} discusses the results.
Note that the algorithm GTD($\lambda$) proposed in \citet{Maei_PhD} (Chapter-6) does not converge on the Baird's example for high values of $\lambda$ (see Appendix A5.3). Hence, we do not compare GTD($\lambda$)-schedule and TDC($\lambda$)-schedule with GTD($\lambda$). 

\begin{figure}[H]
\label{Random_Chain_fig}
    \centering
    \subfloat[TDC vs TDC($\lambda$)-Schedule]{
    {\includegraphics[width=0.49\linewidth]{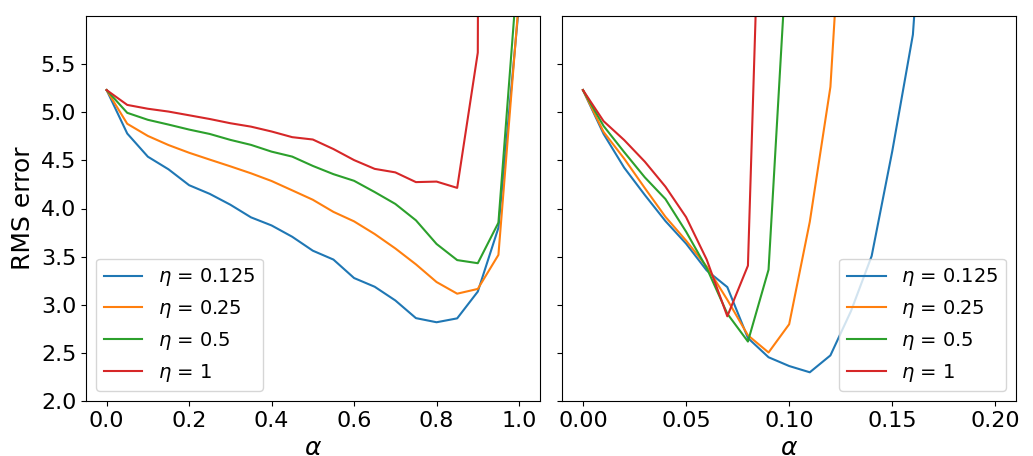}}
    }%
    \subfloat[GTD2 vs GTD($\lambda$)-Schedule]{
    {\includegraphics[width=0.49\linewidth]{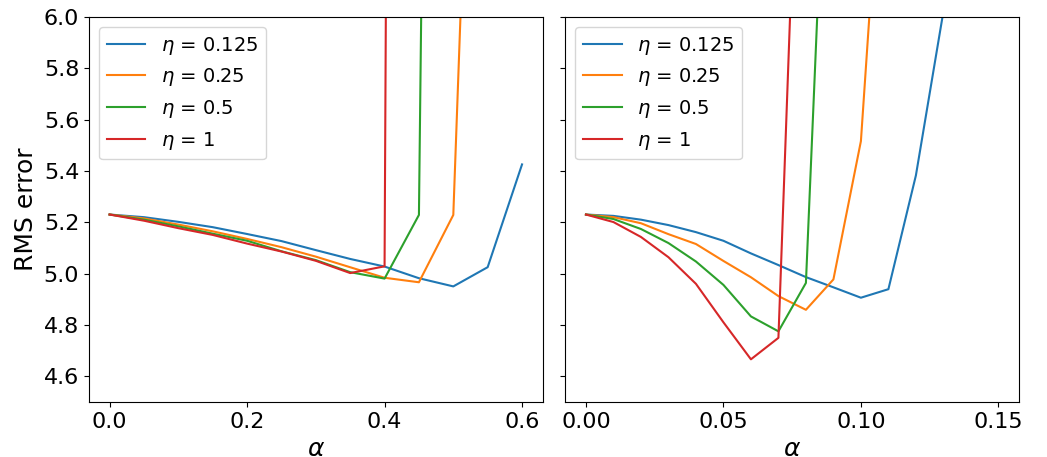}}
    }%
    \caption{RMSE (averaged over 50 episodes and 50 runs) for different values of step-size $\alpha$ for the Random Chain example with Tabular features is plotted. (a) compares the error for TDC algorithm with various values of $\eta = \alpha/\beta$ against TDC($\lambda$)-schedule with \emph{EqualWeights}$(2,4)$. TDC($\lambda$)-schedule attains a lower RMSE than TDC for every corresponding $\eta$ and some $\alpha$. Similarly (b) compares the RMSE for the same problem (averaged over 50 episodes and 50 runs) for GTD2 and GTD($\lambda$)-Schedule and it is observed that GTD$(\lambda)$-Schedule again attains a lower RMSE for each corresponding value of $\eta$.}
    \label{Random_Chain}%
\end{figure}
\subsection*{A5.3 \quad Baird's Counter example}
The off-policy algorithms developed in this paper namely GTD($\lambda$)-schedule, TDC($\lambda$)-schedule as well as the off-policy analog of our TD($\lambda$)-schedule algorithm are applied on the Baird's counter example. Figure \ref{Baird_fig_a} plots the Root Mean Square Projected Bellman Error (RMSPBE) accross time-steps. \citet{Maei_PhD} presented the GTD($\lambda$) algorithm with the following iterates:
\begin{gather}
\label{gtd_theta}
    \theta_{t+1} = \theta_{t} + \alpha_{t}(\delta_{t}z_{t} - \gamma(1-\lambda)(z_{t}^{T}w_{t})\phi_{t+1})\\
    w_{t+1} = w_{t} + \beta_{t}(\delta_{t}z_{t} - (w_{t}^{T}\phi_{t})\phi_{t})
\end{gather}
Figure \ref{Baird_fig_b} shows that GTD($\lambda$) diverges for high values of $\lambda$ and therefore isn't compared with the $\lambda$-schedule algorithms. The divergence is evident from the fact that as $\lambda \rightarrow 1$, iterate \eqref{gtd_theta} tends towards the standard TD($\lambda$) algorithm, which is known to diverge in the off-policy case. 
\begin{figure}[H]
    \centering
    \subfloat[Schedule based Algorithms]{
    \label{Baird_fig_a}
    {\includegraphics[width=0.4\linewidth]{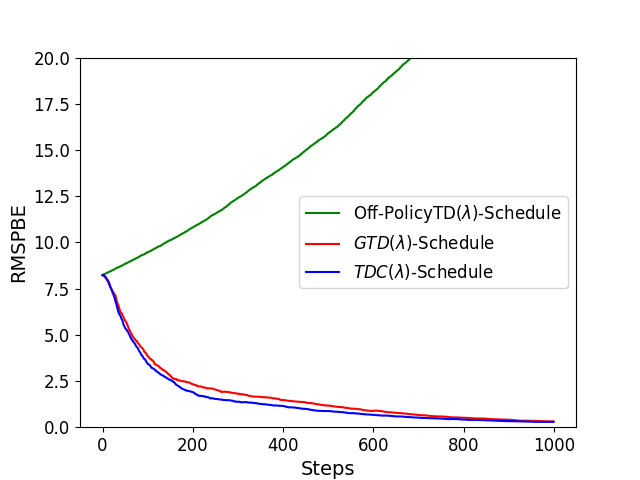}}
    }%
    \subfloat[GTD($\lambda$)]{
    \label{Baird_fig_b}
    {\includegraphics[width=0.4\linewidth]{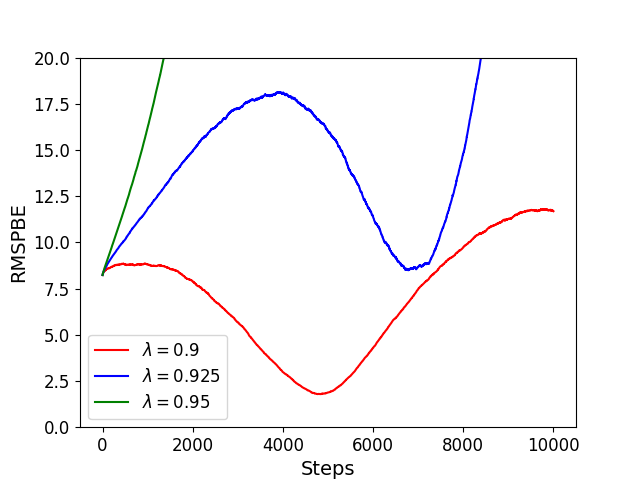}}
    }%
    \caption{RMSPBE across time-steps for Baird's Off-Policy Example. Plot (a) shows that Off-Policy TD($\lambda$)-Schedule diverges while both the gradient-based algorithms, GTD($\lambda$)-schedule and TDC($\lambda$)-schedule converge. The $\lambda$-Schedule chosen in all the three cases is EqualWeights(4,6). The step-size for Off-Policy TD($\lambda$) is $\alpha = 0.005$ and the step-sizes for both the gradient methods are $\alpha = 0.005$, $\beta = 0.05$, respectively. Plot (b) shows that GTD($\lambda$) (\citet{Maei_PhD}) does not converge for high values of $\lambda$.}
\end{figure}
\section*{A6 \quad Remarks on the proof techniques}
\begin{itemize}
    \item \textbf{Remark 1.} The proof of Theorem~4.3 was shown using Assumptions 1--3 and crucially relied on the matrix $A$ being negative definite. Assumption 2 also required that the process $\{s_t\}$ possesses a unique stationary distribution. The analysis of stability and convergence of stochastic recursions driven by Markov noise given in \citet{Arunselvan1} and \citet{BorkarBook} allows for multiple stationary distributions. Further, the set of states can be a compact metric space or a complete and separable metric space such as $\mathbb{R}^m$ but under an additional requirement. Either of these conditions would ensure that the set of time-dependent state-distribution probability measures remains tight and so have limit points in $D(\theta)$. In our case, as with the proof of TD($\lambda$), see \citet{NDP_book}, we let $S$ and hence $\check{S}$ to be finite sets. This provides improved clarity, for instance, Proposition~4.1 could be shown and we are able to show that the algorithm converges to the fix-point of the TD($\lambda$)-Schedule algorithm. Nonetheless the analysis under more general assumptions such as on the state process can be carried out using the results of \citet{Arunselvan1} and \citet{BorkarBook}. In fact an analysis of TD(0) under general requirements has been shown in \citet{Arunselvan1}. A similar analysis can also be carried out for the TD($\lambda$)-schedule algorithm.
    
    \item \textbf{Remark 2.} Regarding proof of Theorem 4.5, Theorem 10 of \citet{chandru-SB} shows that both the iterates remain stable almost surely under similar assumptions as (C1)-(C4). The analysis there is however carried out for the case when the noise sequence is a martingale difference. The same will also work for the case of Markov noise following similar arguments as in \citet{Arunselvan1} and \citet{BorkarBook}. Further, in Theorem 2.6 of \citet{prasenjit-SB}, the convergence of two-timescale stochastic approximation under Markov noise is shown assuming that both the iterates remain stable, i.e., under Assumptions (C1), (C2), (C5), (C6) and under the additional requirement of iterate stability. Assumptions (C3)-(C4) in addition to (C1)-(C2) are sufficient requirements to show the stability of the iterates (see \citet{chandru-SB}). Thus the conditions (C1)-(C6) are sufficient to show both the stability and convergence of two-timescale stochastic approximation iterates.
\end{itemize}
\vfill
}

\end{document}